\documentclass[twoside]{article}

\usepackage[accepted]{aistats2025}

\usepackage[hidelinks]{hyperref}
\usepackage{mathtools}
\usepackage{float}
\usepackage{wrapfig}
\usepackage[utf8]{inputenc}
\usepackage{graphicx}
\usepackage[T1]{fontenc}    
\usepackage{subfig}
\usepackage{booktabs} 
\usepackage{amsmath, amsfonts, amsthm, amssymb} 
\usepackage{url}            
\usepackage{microtype}      
\usepackage{lipsum}
\usepackage{cleveref}
\usepackage{nicefrac}
\usepackage{adjustbox}
\usepackage{multirow}

\usepackage{pgfplotstable}

\linespread{1.0}

\usepackage{nicematrix} 
\makeatletter
\newcommand{\longdash}[1][2em]{%
  \makebox[#1]{$\m@th\smash-\mkern-5mu\cleaders\hbox{$\mkern-2mu\smash-\mkern-2mu$}\hfill\mkern-5mu\smash-$}}
\makeatother
\newcommand{\omitskip}{\kern-\arraycolsep}

\usepackage[round]{natbib}

\bibliographystyle{apalike}

\usepackage[shortlabels]{enumitem}
\setlist{leftmargin=*, topsep=1pt}
\setlist[1]{labelindent=\parindent}

\usepackage{thmtools}
\usepackage{mdframed}
\mdfsetup{skipabove=10pt,skipbelow=5pt}

\declaretheoremstyle[mdframed={outerlinewidth=1pt,innertopmargin=5pt, innerbottommargin=2pt, backgroundcolor=gray!5, linecolor=gray!30}]{thmstyle}
\declaretheorem[style=thmstyle]{theorem}
\declaretheorem[style=thmstyle]{proposition}

\usepackage{algorithm}
\usepackage{soul}
\declaretheoremstyle{exstyle}

\usepackage{titlesec}

\usepackage[toc,page,header]{appendix}
\usepackage{minitoc}

\usepackage{mynotation}


\newcommand{\thelink}{\href{https://github.com/antonior92/advtrain-linear}{\texttt{github.com/antonior92/advtrain-linear}}}
\fancyhead[CE]{\textbf{Efficient Optimization Algorithms for Linear Adversarial Training}}
\fancyhead[CO]{\textbf{Ribeiro, Schön, Zachariah, Francis}}
\fancyfoot[C]{\thepage}
\setlength{\footskip}{0.4in}
\begin{document}

\setlength{\abovedisplayskip}{3pt}
\setlength{\belowdisplayskip}{1pt}

\doparttoc 
\faketableofcontents 

\twocolumn[
\aistatstitle{Efficient Optimization Algorithms for Linear Adversarial Training}

\aistatsauthor{ Ant\^onio H. Ribeiro \And Thomas B. Schön \And Dave Zachariah \And  Francis Bach}

\aistatsaddress{ Uppsala University \And Uppsala University \And Uppsala University \And  INRIA, PSL Research Univ.} ]

\begin{abstract}
  Adversarial training can be used to learn models that are robust against perturbations. For linear models, it can be formulated as a convex optimization problem.  Compared to methods proposed in the context of deep learning, leveraging the optimization structure allows significantly faster convergence rates. Still, the use of generic convex solvers can be inefficient for large-scale problems. Here, we propose tailored optimization algorithms for the adversarial training of linear models, which render large-scale regression and classification problems more tractable. For regression problems, we propose a family of solvers based on iterative ridge regression and, for classification, a family of solvers based on projected gradient descent. The methods are based on extended variable reformulations of the original problem. We illustrate their efficiency in numerical examples.
\end{abstract}

\part{} 
\vspace{-50pt}

\section{\uppercase{Introduction}}

Adversarial training can be used to estimate models that are robust against perturbations~\citep{madry_towards_2018}.  It considers training samples that have been modified by an adversary, with the goal of obtaining a model that will be more robust when faced with newly perturbed samples. The training procedure is formulated as a min-max optimization problem, searching for the best model given the worst-case perturbations.

In linear models, adversarial training can be seen both as a type of regularization and as a type of robust regression.  This method is promising for estimating linear models: For the $\ell_2$-norm, adversarial training has properties similar to ridge regression, and for the $\ell_\infty$-norm, it is similar to Lasso and produces sparse solutions. Unlike the hyperparameters set in these conventional methods, however, the adversarial radius $\delta$ has a clear interpretation and---as proved by~\citet{ribeiro_regularization_2023,xie_high-dimensional_2024}---this parameter can be set independently of the noise levels in the problem. Indeed, for adversarial training, there exists a default choice of $\delta$ values that yields zero coefficients for random outputs and near-oracle prediction performance. Hence, it could be a powerful method in applications where cross-validation cannot be effectively used to choose the regularization parameter.  We illustrate this in Fig.~\ref{fig:adv-train-linear}(\emph{left panel}), where \emph{adversarial training with the default adversarial radius $\delta$} achieves a performance that is close to that of \emph{cross-validated Lasso} in diverse benchmarks. 
 
 \begin{figure*}[t]
    
    \centering
    \includegraphics[width=0.48\textwidth]{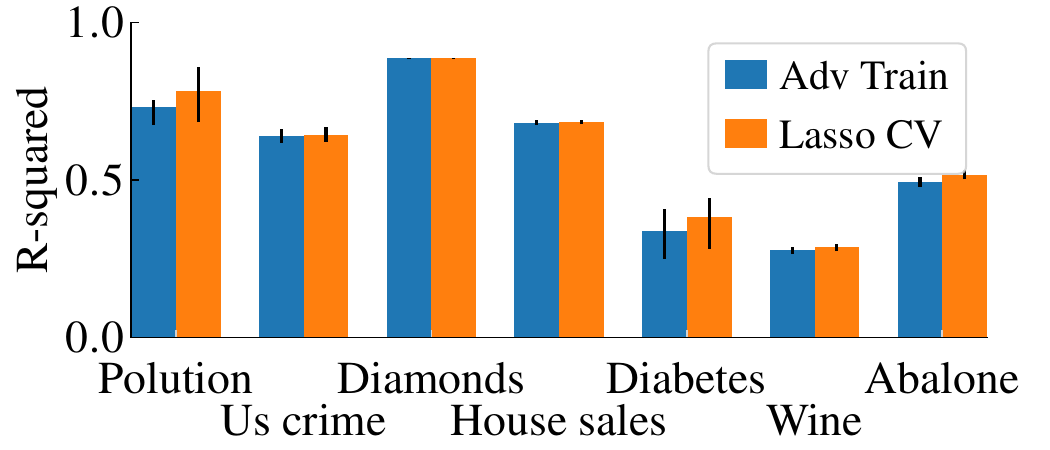}
    \includegraphics[width=0.48\textwidth]{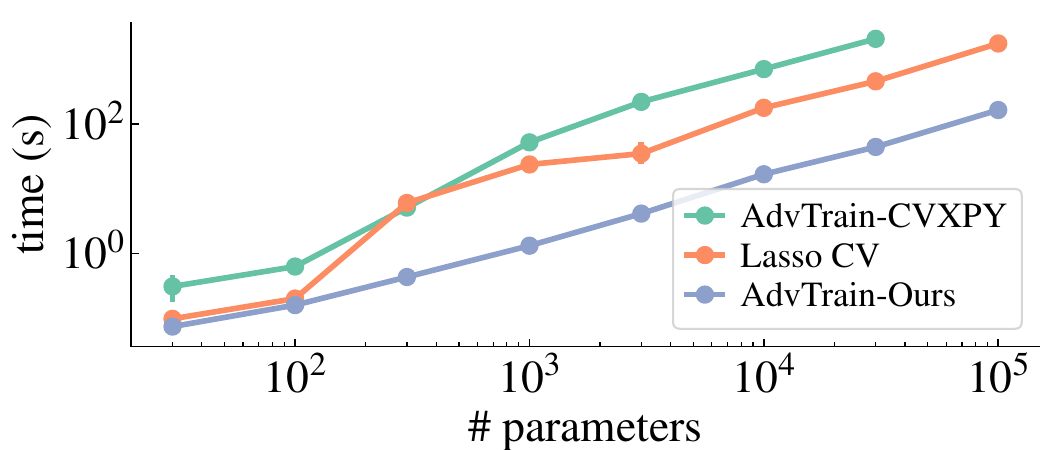}\vspace{-5pt}
    \caption{\emph{Adversarial training in linear regression.} \emph{Left}: we compare linear adversarial training ($\ell_\infty$-norm bounded attacks) using the \emph{default adversarial radius}---see~\citet{ribeiro_regularization_2023} and the description in~\Cref{sec:default-value}---and Lasso with parameters selected through \emph{cross-validation}.   \emph{Right}: the running times for prunned MAGIC dataset of \emph{adversarial training with our tailored solver}, \emph{with CVXPY}, and of \emph{cross-validation Lasso}, optimized as in~\citet{friedman_pathwise_2007}. 
    The error bars in (a) show the interquartile range obtained using bootstrap. In (b), we plot the median of 5 repetitions.}
    \label{fig:adv-train-linear}\vspace{-10pt}
\end{figure*}
 Consider a training dataset  $\{(\x_i, y_i)\}_{i =1}^n$ consisting of $\ntrain$ training datapoints of dimension $\R^\nfeatures \times \R$. Adversarial training in linear models finds the parameter vector $\param \in \R^{\nfeatures}$ by solving
\begin{equation}
\label{eq:advtrain}
   \min_{\param}\left(\frac1n \sum_{i=1}^\ntrain{\max_{\|\dx_i\| \le \delta} \ell\big(y_i, (\x_i + \dx_i)^\top\param\big)}\right), 
\end{equation}
where $\ell(\cdot, \cdot)$ is the loss and the adversarial radius $\delta$ gives the maximum magnitude of the adversarial perturbation. 
In this paper, we consider both classification problems and regression problems---with the logistic and squared error loss. In both cases, the function being minimized in~\eqref{eq:advtrain} is convex.

The two most direct options for solving \eqref{eq:advtrain} have drawbacks. The \emph{first option} is to use \emph{solvers proposed in the context of deep learning}, e.g.,~\citet{madry_towards_2018}. These approaches are scalable but suffer from \emph{slow convergence}, and therefore inaccurate solutions, since they do not exploit the structure of the optimization problem. The \emph{second option} is to use off-the-shelf convex solvers~\citep{diamond_cvxpy_2016}, which in contrast, usually produce accurate solutions due to their fast convergence. On the other hand, they offer limited scalability and are prohibitively slow for problems with many variables, as in the motivating example below. In this paper \emph{we propose tailored solvers that are accurate and efficient for high-dimensional problems, enabling new applications}. 

\textbf{Motivating example.} 
Consider phenotype prediction from genotype in the Diverse MAGIC wheat dataset~\citep{scott_limited_2021} from the National Institute for Applied Botany. The dataset contains the whole genome sequence data and multiple phenotypes for a population of 504 wheat lines. In this problem, our tailored solver converges in less than 5 minutes while the CVXPY solver~\citep{diamond_cvxpy_2016} did not manage to converge in one day.  Fig.~\ref{fig:adv-train-linear}(\emph{right panel}) shows running times for subsampled inputs with increasing sizes. Here the coefficient of determination $R^2$ is used to evaluate goodness-of-fit (higher is better): $\ell_\infty$-adversarial training with default parameter has a performance $R^2=0.35 $ (IQR 0.27-0.46) while cross-validated lasso leads to $R^2=0.38$ (IQR 0.27-0.41).\footnote{We use IQR for bootstrapped interquartile range.}

There are important reasons to be \emph{interested directly in linear models}. High-dimensional sparse problems, where the number of features far exceeds the number of observations, are common in fields like computational biology and genetics.
High variance and overfitting are major concerns, and \emph{simple but highly regularized approaches are often preferred}. For instance, Lasso is very frequently used for polygenic risk score~\citep{torkamani_personal_2018}, proteomics analysis~\citep{safo_derivation_2023}, and multivariate Mendelian randomization~\citep{burgess_robust_2020}. However, Lasso has limitations: while cross-validation can be effective for prediction, it often selects too many variables~\citep{buhlmann_statistics_2011}, especially in small datasets where cross-validation itself may be unreliable. Consequently, methods that bypass cross-validation, such as those implemented in polygenic risk score analysis packages like \texttt{lassosum} and \texttt{LDPred}~\citep{mak_polygenic_2017, vilhjalmsson_modeling_2015}, are often pursued. In light of these challenges, we propose that linear adversarial training could be a promising alternative, with the development of effective solution methods being critical to its success.

\textbf{Contributions.}
The contribution of this paper is to \emph{reformulate the problem and propose tailored efficient solvers for the minimization of Eq.~\eqref{eq:advtrain}}. We propose: (A) An \emph{augmentated} variable formulation that renders a smooth optimization problem in adversarially trained \emph{logistic regression}. We propose an efficient family of first-order algorithms based on this reformulation, allowing us to efficiently solve~\eqref{eq:advtrain} when $\ell(y, \hat{y}) = \log\left(1 + e^ {-y\hat{y}}\right)$. (B) A \emph{reformulation} of adversarially trained \emph{linear regression} that can be solved using iteratively reweighted least-squares problems. This reformulation allow us to efficiently solve~\eqref{eq:advtrain} when $\ell(y, \hat{y}) = \left(y - \hat{y}\right)^2$. We validate these algorithms with real and simulated data. The implementation is available in~\thelink.

\section{\uppercase{Related work}}
\label{related-work}

Adversarial attacks can significantly degrade the performance of state-of-the-art models and adversarial training has emerged as one of the most effective strategies. Relevant related work is discussed next.

\textbf{Adversarial  training for deep learning models.}
Traditional methods for generating \emph{adversarial attacks} involve solving an optimization problem. One approach is to maintain the model's prediction while minimizing the disturbance to the input. Examples include L-BFGS \citep{bruna_intriguing_2014}, DeepFool \citep{moosavi-dezfooli_deepfool_2016} and the C\&W method~\citep{carlini_towards_2017}. Another approach is to maximize the error with limited attack size, such as in the Fast Gradient Size Method~\citep{goodfellow_explaining_2015}, and PGD type of attacks~\citep{madry_towards_2018}. \emph{Aversarial training}  builds on the latter approach, resulting in a min-max  problem, as in~\eqref{eq:advtrain}, that can be solved backpropagating through the inner loop~\citep{madry_towards_2018}. Most improvements proposed in the context of deep learning try to better deal with the inner maximization problem. And, for the linear case, they do not provide improvements over the Fast Gradient Size Method (FGSM), since the inner problem can already be solved exactly. As we discuss in~\Cref{sec:deep-learning-comparison},  \emph{just adapting FGSM to the linear case has limitations in that it is hard to get accurate solutions due to its slow convergence rate.}

\textbf{Adversarial training in linear models.} Adversarial attacks in linear models have been intensely studied over the past 5 years. Linear adversarial training provides an alternative to Lasso and other sparsity inducing methods~\citep{ribeiro_regularization_2023,xie_high-dimensional_2024}. It also provides a simplified analysis for understanding the properties of deep neural networks: indeed, linear models have been used to explore the trade-off between robustness and performance in neural networks~\citep{tsipras_robustness_2019, ilyas_adversarial_2019} and to examine how overparameterization impacts robustness~\citep{ribeiro_overparameterized_2023}. Adversarial training has been studied through asymptotics in binary classification~\citep{taheri_asymptotic_2022} and linear regression~\citep{javanmard_precise_2020}, as well as in classification~\citep{javanmard_precise_2022} and random feature regression~\citep{hassani_curse_2022}. Gaussian classification problems have also been analyzed~\citep{dan_sharp_2020, dobriban_provable_2022}, alongside the effects of dataset size on adversarial performance~\citep{min_curious_2021} and $\ell_\infty$-attacks on linear classifiers~\citep{yin_rademacher_2019}.  
\emph{While the literature above focuses on the generalization properties of adversarial training in linear models, we propose new optimization algorithms to solve the problem.}

\textbf{Robust regression and square-root Lasso.}  Lasso~\citep{tibshirani_regression_1996} usage for sparse recovery depends on the knowledge of the noise variance. Square-root Lasso avoids this dependence~\citep{belloni_square-root_2011}. \citet{ribeiro_regularization_2023} and \cite{xie_high-dimensional_2024} show that $\ell_\infty$-adversarial settings also have the same desirable properties. Both methods can be framed within the robust regression framework~\citep{ribeiro_regularization_2023}. Still, \emph{having tailored solvers that work in high-dimensional settings yields a practical advantage to the adversarial training framework over square-root Lasso.}

\textbf{Sparsity-inducing penalties and convex optimization.} Our solutions are inspired by methods used in the optimization of other sparsity-inducing penalties. For instance, we use the ``$\eta$-trick'' (a.k.a.~iteratively reweighted least-squares) to solve adversarial-trained linear regression~\citep{daubechies_iteratively_2010,bach_eta-trick_2019, bach_optimization_2011}. Our convergence rate analysis uses tools from convex optimization~\citep{bubeck_convex_2015}. We also use conjugate gradient~\citep[Chap. 5]{nocedal_numerical_2006, golub_matrix_2012} to solve the linear systems that arise as subproblems. Finally, we implement variance reduction methods to improve our solver~\citep{gower_variance-reduced_2020}. \emph{We reformulate adversarial training in order to leverage these tools.}

\vspace{-7pt}

\section{\uppercase{Classification}}
\label{classification}
In this section, we focus on adversarial training for classification. More precisely, in solving \eqref{eq:advtrain} when ${y \in \{-1, +1\}}$ and $\ell(y, \hat{y}) = h(y\hat{y})$ for $h$ non-increasing, smooth and convex:
\begin{equation*}
    \min_{\param} \frac1n \sum_{i=1}^\ntrain{\max_{\|\dx_i\| \le \delta} h\big(y_i(\x_i + \dx_i)^\top\param\big)}\\
\end{equation*}
We focus on $h(z) = \log\left(1 + e^{-z}\right)$, but other alternatives are possible, i.e., $h(z) = (1 - z)_+^2$.
We first introduce a reformulation of this problem that turns it into a constrained smooth optimization problem (\Cref{sec:reformulation}) and propose the use of projected gradient descent to solve it. In~\Cref{sec:projection-step} we give more details on how to solve the projection step efficiently. In~\Cref{sec:deep-learning-comparison} we give a comparison with methods used in the context of deep learning.  Finally, in~\Cref{sec:improvements} we propose improvements of this base algorithm.

\textbf{Notation.} We use $\|\cdot\|$ to denote an arbitrary norm and ${\|\param\|_* =  \sup
\{\param^\top \x: \|\x\|\le 1\}}$ its dual norm.
The relevant pairs of dual norms for this papers are ${(\|\cdot\|_2, \|\cdot\|_2)}$ and ${(\|\cdot\|_1, \|\cdot\|_{\infty})}$.

\subsection{Smooth minimization formulation}
\label{sec:reformulation}
The following result allows us to solve a smooth constrained optimization problem rather than the original min-max optimization.

\begin{proposition}
    \label{thm:advtrain-classif-closeform}
Let $\rho > 0$ and $\ell(y, \hat{y}) = h(y\hat{y})$ for $h$ non-increasing, 1-smooth and convex. The optimization of \eqref{eq:advtrain} is  equivalent to
\begin{align}\vspace{-5pt}
\min_{(\param,t)} &~~\mathcal{R}(\param, t) \myeq  \frac{1}{n}\sum_{i=1}^n h(y_i \x_i^ \top \param - \rho t), \label{eq:constrained problem}\\
&\text{subject to: } \rho t \ge \delta \|\param\|_*.\nonumber
\end{align}
Moreover, the function $\mathcal{R}$ is $L$-smooth and jointly convex. And, $L \le\frac{1}{2} \lambda_{\max_{}}(\frac{1}{n}\sum_i \x_i\x_i^\top)$ if ${\rho^2 \le\lambda_{\max_{}}(\frac{1}{n}\sum_i \x_i\x_i^\top)}$.
\end{proposition}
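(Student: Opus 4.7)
My plan is to prove the result in three steps: reformulate the inner maximization in closed form, verify that introducing the slack variable $t$ preserves the optimum, and then bound the Hessian of the resulting smooth objective.

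First, since $h$ is non-increasing, the inner maximization commutes with $h$:
\begin{align*}
\max_{\|\dx_i\|\le\delta} h(y_i(\x_i+\dx_i)^\top\param) = h\!\left(\min_{\|\dx_i\|\le\delta} y_i(\x_i+\dx_i)^\top\param\right).
\end{align*}
Because $y_i\in\{\pm 1\}$, the feasible ball $\{\dx_i:\|\dx_i\|\le\delta\}$ is invariant under sign flip, so the inner minimization reduces by definition of the dual norm to $y_i\x_i^\top\param - \delta\|\param\|_*$. Introducing the auxiliary variable $t$ with the constraint $\rho t \ge \delta\|\param\|_*$, the monotonicity of $h$ implies that for any fixed $\param$ the objective decreases as $\rho t$ decreases, so the infimum over feasible $t$ is attained at $\rho t = \delta\|\param\|_*$. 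Hence problem \eqref{eq:constrained problem} has the same value, and the same optimizers in $\param$, as the original min-max problem.

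Joint convexity of $\mathcal{R}$ is immediate: each summand $h(y_i\x_i^\top\param - \rho t)$ is the composition of the convex function $h$ with an affine map in $(\param,t)$, and the feasible set $\{\rho t\ge\delta\|\param\|_*\}$ is convex because $\|\cdot\|_*$ is a norm. For the smoothness bound, I would compute the Hessian
\begin{align*}
\nabla^2 \mathcal{R}(\param,t) = \frac{1}{n}\sum_{i=1}^n h''(z_i)\, v_iv_i^\top, \qquad v_i = \begin{pmatrix} y_i\x_i \\ -\rho \end{pmatrix},
\end{align*}
with $z_i = y_i\x_i^\top\param - \rho t$. Using the smoothness assumption to bound $h''(z_i)$ by a constant, the task reduces to controlling $\lambda_{\max}(M)$ for $M = \frac{1}{n}\sum_i v_i v_i^\top$. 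Evaluating the quadratic form on $w=(\param,t)$ gives $\frac{1}{n}\sum_i(y_i\x_i^\top\param - \rho t)^2$, which by the elementary inequality $(a-b)^2\le 2(a^2+b^2)$ is at most $2\param^\top A\param + 2\rho^2 t^2$, where $A=\frac{1}{n}\sum_i\x_i\x_i^\top$. Thus $\lambda_{\max}(M)\le 2\max(\lambda_{\max}(A),\rho^2)$, and the assumption $\rho^2\le\lambda_{\max}(A)$ collapses this to $2\lambda_{\max}(A)$. Multiplying by the bound on $h''$ yields the stated smoothness constant.

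The main delicate point I expect is justifying that the slack constraint is active at optimality: this is intuitive from the monotonicity of $h$, but needs care since $h$ is only non-increasing (not strictly), so one argues instead that any feasible $(\param, t)$ with $\rho t > \delta\|\param\|_*$ can be replaced by $(\param, \delta\|\param\|_*/\rho)$ without increasing $\mathcal{R}$. A secondary subtlety is extracting the sharpest numerical prefactor in the smoothness bound---the $(a-b)^2 \le 2(a^2+b^2)$ step is clean but potentially lossy, so matching the exact constant in the proposition may require either a tighter block-matrix analysis of $M$ or the specific numerical bound on $h''$ appropriate to the chosen loss.
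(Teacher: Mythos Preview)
Your proof follows essentially the same approach as the paper: closed-form inner maximization via monotonicity and dual norms, slack-variable equivalence via monotonicity of $h$, convexity by affine composition, and a Hessian bound. The only substantive difference is in the last step: the paper simply asserts the block-diagonal bound
\[
\nabla^2 \mathcal{R}(\vv{w}) \preccurlyeq \tfrac{1}{2}\begin{pmatrix} \tfrac{1}{n}\sum_i \x_i\x_i^\top & 0 \\ 0 & \rho^2 \end{pmatrix}
\]
without further justification, whereas you make the argument explicit via $(a-b)^2\le 2(a^2+b^2)$ applied to the quadratic form $w^\top M w$. Your route is more transparent but, as you correctly anticipate, yields a prefactor of $2$ rather than $\tfrac{1}{2}$; the paper's sharper constant appears to rely implicitly on the $\tfrac{1}{4}$ curvature bound specific to the logistic loss (so that $h''\le\tfrac{1}{4}$ combined with your factor of $2$ gives $\tfrac{1}{2}$), which is in mild tension with the stated ``$1$-smooth'' hypothesis. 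Your self-diagnosis of this discrepancy is accurate.
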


This reformulation allows us to solve the problem efficiently using projected gradient descent (GD). 
\begin{algorithm}[H]
\caption{Smooth min. with projected GD}\label{alg:pgd}
\textit{Choose}  step size $\gamma$ \\
$\vv{w}^{(0)} \leftarrow  (\param^{(0)},t^{(0)})$ \\
\textbf{for $k = 1, 2 \dots$:}
\begin{enumerate}
    \item[]$\vv{w}^{(k)} \leftarrow  \proj_C\left(  \vv{w}^{(k-1)} - \gamma \nabla \mathcal{R}(\vv{w}^{(k-1)})\right)$
\end{enumerate}
\end{algorithm}
\vspace{-10pt}
The cost function $\mathcal{R}$ optimized in~\Cref{thm:advtrain-classif-closeform} is $L$-smooth (i.e., its gradient is $L$-Lipshitz continuous, that is, $\|\nabla \mathcal{R}(\vv{w}) - \nabla \mathcal{R}(\vv{z})\|_2 \le L \|\vv{w} - \vv{z}\|_2$). Let $k$ denote the number of iterations, projected GD (\Cref{alg:pgd}) has a convergence rate of $\bigO(L/k)$ for a step size $\gamma = 1/ L$, see~\citet[Theorem 3.7]{bubeck_convex_2015}. 
Let
$\vv{w}$ denote the extended variable in  $\R^{\nfeatures + 1}$, $C$ the set of feasible points, i.e., \[C = \{\vv{w} = (\param, t ):  \rho t \ge \delta \|\param\|_*\} \subset \R^{\nfeatures + 1},\]  and $\proj_C,$ the projection of a point into this set.

\begin{proof}[Proof of \Cref{thm:advtrain-classif-closeform}]
Eq. \eqref{eq:advtrain} is equivalent to 
\begin{equation}
\label{eq:reformulated-classification-problem}
\min_{\param}\frac{1}{n}\sum_{i=1}^n h(y_i \x_i^\top \param - \delta \|\param\|_*),
\end{equation}
for $\ell(y, \hat{y}) = h(y\hat{y})$ and $h$ non-increasing, smooth and convex. See ~\Cref{adv-train-reform} for a proof.  In turn, the minimization of \eqref{eq:reformulated-classification-problem} is equivalent to that of $\mathcal{R}(\param, t)$ subject to the constraint  $\delta t \ge \rho \|\param\|$. And since $\ell(\cdot)$ is convex and smooth, so  is $\mathcal{R}(\cdot, \cdot)$. Finally, the Hessian of $\mathcal{R}$ is:
\begin{equation}
\label{eq:bound-hessian}
\nabla^2 \mathcal{R} (\vv{w})\preccurlyeq \frac{1}{2}
\begin{bmatrix}
  \frac{1}{n} \sum_{i=1}^n \x_i \x_i^\top& 0\\
  0 & \rho^2 
\end{bmatrix}.
\end{equation}
And, since $L  \le \sup_{\vv{w}}\|\nabla^2 \mathcal{R}(\vv{w})\|_2$, we have
\begin{align*}
     L \le \frac{1}{2} \max\Big(\big\|\tfrac{1}{n}\sum_{i=1}^n \x_i \x_i^\top\big\|_2, \rho^2 \Big) \le \lambda_{\max}\Big(\tfrac{1}{n}\sum_{i=1}^n \x_i \x_i^\top\Big),
\end{align*}
where the last inequality follows from the condition on $\rho$ and that $\|\cdot \|_2 =\lambda_{\max}(\cdot)$ for matrices.
\end{proof}

\subsection{Projection step}
\label{sec:projection-step}

The set $C$ is convex and we can define the following projection operator:
\begin{align*}
 \proj_C (\widetilde{\param}, \tilde{t}) &= \mathrm{arg}\min_{\rho  t \ge \delta \|\param\|_*\nonumber}  \frac{1}{2}\|\param  - \widetilde{\param}\|^2_2 + \frac{1}{2}(t - \tilde{t})^2.
\end{align*}
For $\ell_2$-adversarial and $\ell_{\infty}$-adversarial attacks the following propositions can be used for computing the projection. The proofs are given in~\Cref{proof-proj}.

\begin{proposition}[Projection step]
\label{thm:proj}
If  $(\widetilde{\param}, \tilde{t})   \in C$, the projection is the identity.
On the other hand, if  $(\widetilde{\param}, \tilde{t})  \not \in C$, let us denote $\proj_C (\widetilde{\param}, \tilde{t}) = (\param^{\proj}, t^{\proj})$. We have that:
\begin{enumerate}
\item For  $C = \{(\param, t): \rho t \ge \delta \|\param\|_2\}$, then
    $t^{\proj } = \delta\frac{\rho \|\widetilde{\param}\|_2 + \delta\tilde{t}}{\delta^2 + \rho^2}$ and $\param^ {\proj} = \frac{\rho t^{\proj}}{\delta\|\widetilde{\param}\|_2} \widetilde{\param}.$

    \item Now, for  $C = \{(\param, t): \rho t \ge \delta \|\param\|_1 \}$ .   Let $\left(\cdot \right)_+$ denote the operator  $\max (\cdot, 0)$ and define $\lambda$  as the solution to
    \begin{equation}
    \label{eq:piecewise-linear}
    \sum_{i = 1}^\nfeatures (|\widetilde{\beta}_i|  - \delta \lambda )_+  =  \tilde t +  \rho \lambda,
    \end{equation}
    which exists and is unique for $\tilde t>0$. Then,
    $t^{\proj }\mathord{=}\tilde t + \rho \lambda$ \text{and} ${\beta^{\proj}_i\mathord{=}\text{sign}(\widetilde{\beta}_i)\large(|\widetilde{\beta}_i|  - \delta \lambda \large)_+.}$
    \end{enumerate}
\end{proposition}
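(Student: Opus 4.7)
The plan is to characterize the projection via the Karush--Kuhn--Tucker (KKT) conditions. The objective $\tfrac{1}{2}\|\param-\widetilde{\param}\|_2^2 + \tfrac{1}{2}(t-\tilde t)^2$ is strictly convex and $C$ is a closed convex cone with strictly feasible interior (for instance $(\param,t)=(0,1)$ satisfies $\rho t > \delta\|\param\|_*$), so Slater's condition holds, the projection exists and is unique, and the KKT system is both necessary and sufficient for optimality. The case $(\widetilde{\param}, \tilde t) \in C$ is immediate since the unconstrained minimizer is already feasible; so from here on I assume $(\widetilde{\param}, \tilde t) \notin C$, which forces the single inequality constraint to be active at the projection.

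Introducing a nonnegative multiplier $\lambda$ for the constraint $\delta\|\param\|_* - \rho t \le 0$, stationarity in $t$ gives $t^{\proj} = \tilde t + \rho\lambda$, while stationarity in $\param$ reads $0 \in \param - \widetilde{\param} + \lambda\delta\,\partial\|\param\|_*$. For the $\ell_2$ case the subdifferential at a nonzero point is the singleton $\{\param/\|\param\|_2\}$, which forces $\param^{\proj}$ to be a nonnegative scalar multiple of $\widetilde{\param}$. Writing $\param^{\proj} = c\,\widetilde{\param}$ and combining the stationarity identity with the active constraint $\rho t^{\proj} = \delta\|\param^{\proj}\|_2$ reduces everything to a $2\times 2$ linear system in $(\lambda,c)$; solving it and substituting back into the expressions for $t^{\proj}$ and $\param^{\proj}$ recovers, after a short algebraic simplification, the two closed-form expressions stated in the $\ell_2$ case.

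For the $\ell_1$ case the stationarity condition decouples coordinatewise into $0 \in \beta_i - \widetilde{\beta}_i + \lambda\delta\,\partial|\beta_i|$, whose solution is the soft-threshold $\beta_i^{\proj} = \mathrm{sign}(\widetilde{\beta}_i)(|\widetilde{\beta}_i| - \delta\lambda)_+$, which is precisely the second formula in the statement. Substituting this together with $t^{\proj} = \tilde t + \rho\lambda$ into the active constraint $\rho t^{\proj} = \delta\|\param^{\proj}\|_1$ collapses the KKT system to a single scalar equation in $\lambda$, equivalent to~\eqref{eq:piecewise-linear} after rearrangement. The main---and essentially only---substantive step is existence and uniqueness of $\lambda$: the left-hand side $\varphi(\lambda) := \sum_i(|\widetilde{\beta}_i|-\delta\lambda)_+$ is continuous, piecewise linear and non-increasing on $[0,\infty)$, with $\varphi(0)=\|\widetilde{\param}\|_1$ and $\varphi(\lambda)=0$ once $\lambda\ge\max_i|\widetilde{\beta}_i|/\delta$, whereas the right-hand side is strictly increasing and affine with value $\tilde t > 0$ at $\lambda=0$. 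Infeasibility of $(\widetilde{\param}, \tilde t)$ implies $\varphi(0)$ strictly exceeds the right-hand side at $\lambda=0$, so the intermediate value theorem combined with strict monotonicity of the difference delivers exactly one $\lambda>0$ satisfying the equation. Nonnegativity of $\lambda$ and complementary slackness then hold by construction, and KKT sufficiency closes the argument.
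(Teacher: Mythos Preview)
Your proposal is correct and follows essentially the same route as the paper: both arguments set up the Lagrangian, read off $t^{\proj}=\tilde t+\rho\lambda$ and the soft-threshold form of $\param^{\proj}$ from the KKT stationarity conditions, substitute into the active constraint to get a scalar equation for $\lambda$, and establish existence/uniqueness from the opposite monotonicities of the two sides. The only cosmetic difference is in the $\ell_2$ case, where the paper first squares the constraint to $\rho^2 t^2 \ge \delta^2\|\param\|_2^2$ so as to work with a smooth Lagrangian, while you keep the original conic constraint and invoke the subdifferential of $\|\cdot\|_2$; both routes land on the same closed form.
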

Here the projection step for $\ell_{\infty}$-adversarial attacks needs to solve the piecewise linear equation \eqref{eq:piecewise-linear}. This equation can be solved by sorting the elements of the sum and has the overall computational cost of $\bigO(p\log{p})$.\footnote{Our problem is similar to projecting into the $\ell_1$ ball. See \citet{condat_fast_2016} for a description of different algorithms for it. There are alternatives with an \emph{expected} complexity of $\bigO(p)$ that might be faster in practice but have worst-case complexity$\bigO(p^2)$.  Similar algorithms might apply here but they will not be considered in our developments.}  For $\ell_2$-adversarial attacks the projection has the computational cost of $\bigO(p)$.

\subsection{Pratical considerations and improvements on the basic algorithm}
\label{sec:improvements}
\begin{figure}[t]
    \centering
    \includegraphics[width=0.48\textwidth]{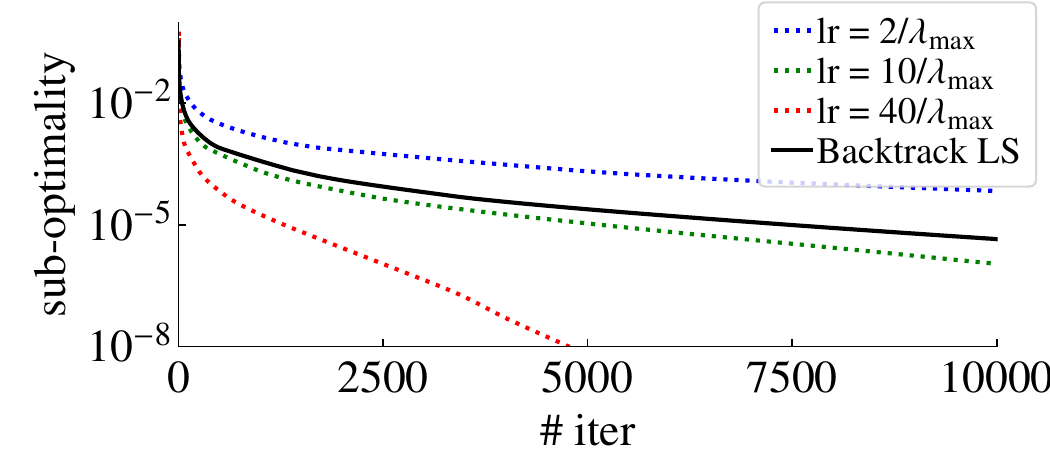}\\
    \includegraphics[width=0.48\textwidth]{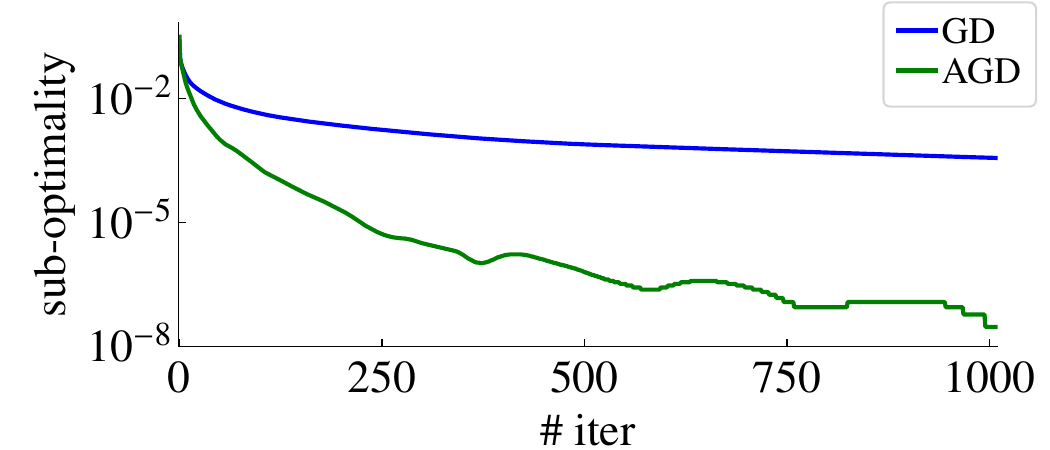}\\
    \includegraphics[width=0.48\textwidth]{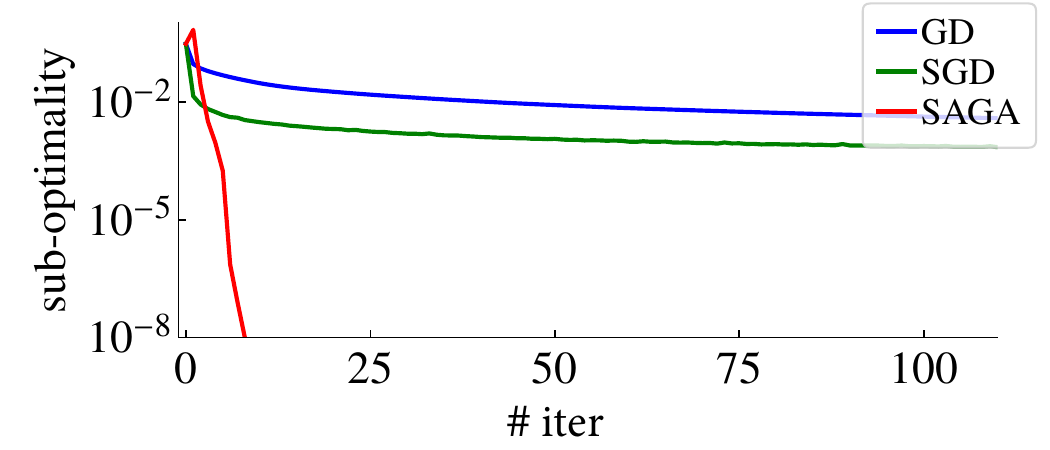}
    \caption{\emph{Convergence of Projected GD.} Sub-optimality \emph{vs} number of iterations in the MNIST dataset. \emph{Top:} we show the results for different step sizes. \emph{Middle:} we show the effect of momentum (with line search implemented for both methods). \emph{Bottom:} we compare GD, SGD and SAGA. Results for $\ell_\infty$-adv. training with $\delta=0.01$.  Here one epoch is a full pass throughout the dataset.}
    \label{fig:adv-train-pgd}
\end{figure}
In what follows, we describe practical details and improvements.   
These variations that we proposed, implemented and evaluated are described in \Cref{tab:summary-algorithm-classification}.  We also report their convergence rates and costs per interaction. Pseudo-algorithms are provided in \Cref{adversarial-train-logist-implementations}.

\begin{table*}
    \centering
    \caption{\emph{Proposed algorithms for classification.} Variance reduction is abbreviated as \emph{VR}, Acceleration as \emph{Acc.}we consider the computational cost of the projection step $\bigO(p\log{p})$.} 
    \begin{tabular}{l|l|ccc|cc}
    \hline
        & Algorithm  & Line search & Acc. &  VR & Cost/iter ($\nabla$ + $\proj$) & Convergence Rate  \\
         \hline
        \multirow{3}{*}{Determin.}  & FGSM with GD & $\times$ & $\times$  & - & $\bigO(\nfeatures \ntrain) $ & $\bigO(R/\sqrt{k})$\\
        & GD \textbf{(Ours)} &  \checkmark & $\times$& - & $\bigO(\nfeatures \ntrain) + \bigO(\nfeatures \log{\nfeatures})$ & $\bigO(L/k)$\\
        & AGD  \textbf{(Ours)} &   \checkmark  & \checkmark  & - & $\bigO(\nfeatures \ntrain) + \bigO(\nfeatures \log{\nfeatures})$ & 
        $\bigO(L/k^2)$\\
          \hline
        \multirow{3}{*}{Stochastic}  & FGSM with SGD & $\times$ & $\times$ &   - & $\bigO(\nfeatures) $ & $\bigO(R/\sqrt{k})$\\
        & SGD \textbf{(Ours)}&  $\times$ & $\times$ & $\times$  & $\bigO(\nfeatures) + \bigO(\nfeatures \log{\nfeatures})$ & $\bigO( (L+n)/\sqrt{k})$\\
        & SAGA \textbf{(Ours)} & $\times$ & $\times$ &   \checkmark  & $\bigO(\nfeatures) + \bigO(\nfeatures \log{\nfeatures})$ & $\bigO((L+n)/k)$\\
        \hline
    \end{tabular}\\

    \begin{minipage}{1\textwidth}\linespread{0.7}\selectfont
        {\scriptsize  \textbf{Notes:} $L$ and $R$ are the smoothness and Lipschitz constant of $\mathcal{R}$ in \eqref{eq:constrained problem}. For FGSM with GD it is not possible to use efficient line search methods that are based on curvature conditions. The proof of the convergence rate of GD, AGD, SGD is given in~\citep{bubeck_convex_2015}, Chapters 3 and 6. The proof for SAGA is given in~\citep{defazio_saga_2014}. }
    \end{minipage} \vspace{-2pt}
    \label{tab:summary-algorithm-classification}
\end{table*}

\textbf{(a) Choosing step size.} \Cref{thm:advtrain-classif-closeform} yields practical choices of $\gamma$. Notice that $2 L = \lambda_{\max_{}}(\frac{1}{n}\sum \x_i\x_i^\top)$ yielding  $\gamma = 1/L =  2/ \lambda_{\max_{}}(\frac{1}{n}\sum \x_i\x_i^\top)$. In our implementation, we approximately compute the maximum eigenvalue using 10 iterations of the power method \citep[Section 7.3.1]{golub_matrix_2012}. Motivated by the proof of~\Cref{thm:advtrain-classif-closeform}, we also choose $\rho = \frac{1}{\sqrt{\ntrain}}\sum_i^n\|\x_i\|_2$. This guarantees the conditions of the theorem are satisfied. It also guarantees that the trace of the matrix in the left-hand-side of \Cref{eq:bound-hessian} is the same as the trace of the empirical covariance matrix $\tfrac{1}{n}\sum_{i=1}^n \x_i \x_i^\top$. As an alternative for a fixed step size, we propose the use of backtracking line search (LS), which we will describe next. Notice, that LS will not work for stochastic methods in (d) and (e), hence, the considerations here are still useful.

\textbf{(b) Backtracking line search.} This method is used to determine the step size. The method involves starting with a relatively large estimate of the step size for movement along the line search direction, and iteratively shrinking the step size (i.e., ``backtracking'') until a decrease of the objective function is observed that adequately corresponds to the amount of decrease that is expected \citep[Chapter 3]{nocedal_numerical_2006}. More precisely we use the same backtracking line search used by~\citet{beck_fast_2009}, it considers the following quadratic approximation around a point $\vv{w}$:
$\widetilde{\mathcal{R}}_{\vv{w}}(\vv{w}^+; \gamma) = \mathcal{R}(\vv{w}) + (\vv{w}^+ - \vv{w})^\top\nabla \mathcal{R}(\vv{w}) + \frac{1}{2\gamma} \|\vv{w}^+ - \vv{w}\|_2^2.$
accepting the next point $\vv{w}^+$ only if $\mathcal{R}(\vv{w}^+) \le \widetilde{\mathcal{R}}_{\vv{w}}(\vv{w}^+; \gamma)$. If a point is rejected, the learning used in the gradient step is divided by 2, consecutively, until the condition is satisfied.

\textbf{(c) Accelerated projected gradient descent.} Accelerated gradient descent is a popular improvement of the standard gradient descent. It uses information from the previous iteration to improve the convergence of gradient descent.
The cost function $\mathcal{R}$ optimized in~\Cref{thm:advtrain-classif-closeform} is $L$-smooth and step size $\gamma = 1/ L$, the projected gradient descent has a worst-case complexity of $\bigO(L/k)$.  The projected accelerated gradient descent(AGD) considers the addition of momentum for $\gamma = 1/ L$ and  has a worst-case complexity of $\bigO(L/k^2)$. The analysis of the algorithm follows similar steps as that of FISTA~\citep{beck_fast_2009}. The effect of using accelerated projected gradient in practice is illustrated in \Cref{fig:adv-train-pgd}(\emph{middle panel}).

\textbf{(d) Stochastic gradient descent.}
\label{sec:stochastic}
The stochastic gradient descent (SGD) method avoids the heavy cost per iteration of GD by using, in each update, one randomly selected gradient instead of the full gradient.
That is, let us define $f_i(\vv{w}) = \ell(y_i \x_i \param - \rho t)$, we can write $\mathcal{R}(\vv{w}) =  \frac{1}{n}\sum_{i=1}^n f_i(\vv{w})$ and its gradient $\nabla \mathcal{R}(\vv{w}) =    \frac{1}{n}\sum_{i=1}^n \nabla f_i(\vv{w})$. SGD  use one randomly selected $\nabla f_{i^{k}}$ at each iteration instead of the summation of all terms.    \emph{Batch implementations} were also implemented and tested. See~\Cref{fig:varying-batch-size} in the Appendix.

\textbf{(e) Variance reduction.} Variance reduction methods keep an estimate $g_k$ of the gradient, with the estimate having a variance that converges to zero \citep[see, e.g.,][and references therein]{gower_variance-reduced_2020}.  
For gradient descent methods we have $\Exp{\nabla f_i (\vv{w})} = \nabla \mathcal{R}(\vv{w})$. Having an unbiased estimate, however, is not sufficient to guarantee the convergence of SGD with fixed step size $\gamma$, since $\nabla f_i (\vv{w})$ has a variance that does not go to zero. While one can use sequence of decreasing step size, however, it can be difficult to tune this sequence in a way that this decreasing is not too fast or slow to guarantee a good convergence.

Variance reduction methods provide a solution to this problem. In our implementation, we use SAGA (stochastic average gradient `am\'elioré') method~\citep{defazio_saga_2014}. SAGA has several properties that make it attractive to our problem. Our problem is not necessarily strongly convex, and SAGA has proven convergence for non-strongly-convex functions with rate $\bigO(1/k)$ (which improves the $\bigO(1/\sqrt{k})$ rate obtained for gradient descent). Another advantage is that SAGA automatically adapts to $\mu$-strongly convex function, obtaining a linear convergence rate  $\bigO((1 - \mu/L)^k)$.  Hence, it can benefit from the curvature when it is available which can make it significantly faster in practice when compared to SGD, as we can see in~\Cref{fig:adv-train-pgd}(\emph{bottom panel}) where it looks like SAGA converges with linear rate of convergence (it is a line in the log-linear plot).

\subsection{Comparisson with deep learning adversarial training}
\label{sec:deep-learning-comparison}
Here we compare our method with adversarial training methods proposed originally for deep learning.  

\textbf{FGSM~\citep{goodfellow_explaining_2015}.} Let ${J_i(x_i) =  h\big(y_i\x_i^\top\param\big)}$ the Fast Gradient Sign Method (FGSM) result in the adversarial perturbation as ${\dx_i = \delta \cdot \sign \nabla_x J(\x_i)}$.
In \Cref{closed-form-fgsm}, we show that i) using FGSM~\citep{goodfellow_explaining_2015} to compute the adversarial attack yields the optimal solution for the inner loop when the model is linear and $\ell_\infty$ perturbations are used, and ii) using FGSM to compute the inner loop gives rise to the following optimization problem:
\begin{equation}
\min_{\param}\frac{1}{n}\sum_{i=1}^n h(y_i \x_i^\top \param - \delta \|\param\|_{1}).
\end{equation}
The cost function is not smooth in $\param$. On the one hand, gradient descent has a better rate of convergence for smooth functions (when compared to $O(\frac{1}{\sqrt{k}})$ for non-smooth function). On the other hand, the function being smooth allows for many of the improvements from \Cref{sec:improvements} to be applicable. This results in both worse results in practice (see \Cref{fig:comparisson-fgsm}) and also the worst theoretical convergence rate (See \Cref{tab:summary-algorithm-classification}). 

\begin{figure}
    \centering
    \includegraphics[width=0.48\textwidth]{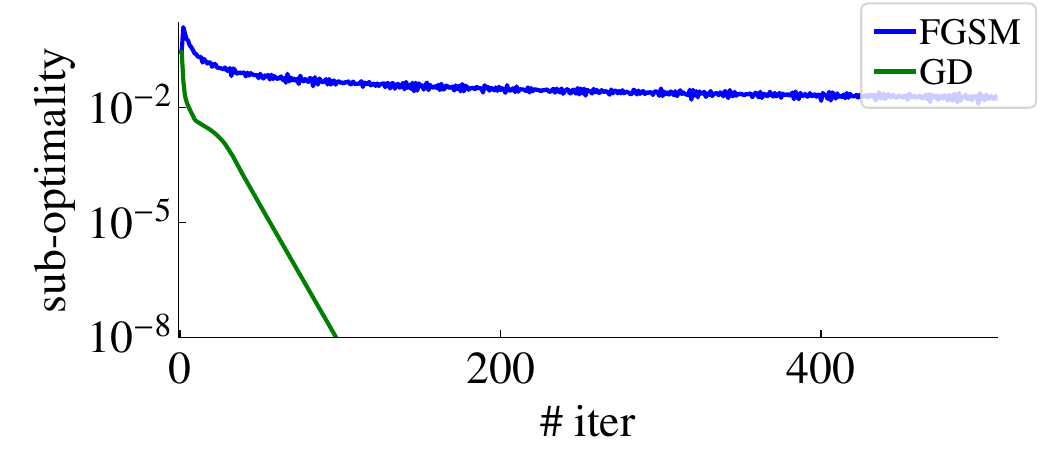}\\
    \includegraphics[width=0.48\textwidth]{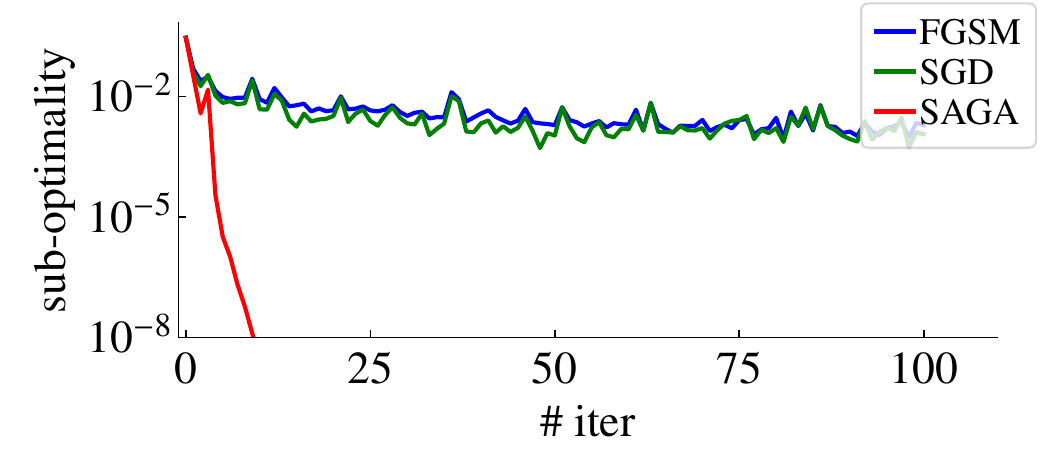}\vspace{-4pt}
    \caption{\emph{Comparison with FGSM.} Sub-optimality \emph{vs}  iterations.  On the \emph{top}, we compare our methods with the gradient descent implementation of FGSM.  On the \emph{bottom}, with the stochastic implementation of FGSM. Results for $\ell_\infty$-adversarial training ($\delta=0.01$).}
    \label{fig:comparisson-fgsm}\vspace{-10pt}
\end{figure}

\textbf{PGD~\citep{madry_towards_2018}} This popular approach makes use of projected gradient descent to solve the inner maximization problem, and the stochastic gradient descent (or alternatives) to solve the outer loop (having to backpropagate over the inner loop). In the linear case, however, multiple interactions are not needed to solve the inner loop, since we can solve it in one iteration. Hence, this method reduce to the same as applying FGSM. The same holds true for many of the other methods proposed in the context of deep learning.

\section{\uppercase{Regression}}
\label{regression}
In this section, we focus on adversarial training for regression (with a squared loss). In this case, the adversarial training problem can be formulated as
\begin{equation}
    \label{eq:adv-train-regression}
    \min_{\param} \frac1n \sum_{i=1}^\ntrain{\max_{\|\dx_i\| \le \delta} \big(y_i- (\x_i + \dx_i)^\top\param\big)^2}.
\end{equation}
Here, we propose the use of \Cref{alg:irrr} for solving it. We derive it as follows: we first introduce an extended formulation of this problem that is jointly convex and for which a blockwise coordinate descent optimization converges (\Cref{blockwise-coordinate-descent}). Next, we show that writing down the closed formula solutions for each minimization problem in the blockwise coordinate descent yields an iterative reweighted ridge regression algorithm (\Cref{irr}). Finally, in~\Cref{sec:improvements-irr} we propose improvements in this base algorithm.

\begin{algorithm}[H]
    \caption{Iterative Reweighted Ridge Regression}\label{alg:irrr}
    \textbf{Initialize:} sample weights $w_i\leftarrow 1, i = 1, \dots, \ntrain$.\\
    \phantom{\textbf{Initialize:}} parameter weights $\gamma_i\leftarrow 1, i = 1, \dots, \nfeatures$.
    
    \textbf{Repeat:}
    \begin{enumerate}
    \item \emph{Solve} reweighted ridge regression:\vspace{-5pt}
    \begin{equation*}
        \widehat{\param} \leftarrow\text{arg}\min_{\param} \sum_{i=1}^{\ntrain} w_i(y_i - \x_i^\top\param)^2 + \sum_{j=1}^{\nfeatures}  \gamma_j\beta_j^2\vspace{-5pt}
    \end{equation*}
    \item \emph{Update} weights:~~$\boldsymbol{w}, \boldsymbol{\gamma}\leftarrow \mathtt{UpdateWeights}(\widehat{\param})$
    \item \emph{Quit} if \texttt{StopCriteria}.
    \end{enumerate}
\end{algorithm}

\vspace{-10pt}
\subsection{Blockwise coordinate descent}
\label{blockwise-coordinate-descent}

The following result allows us to solve an easier optimization problem (although higher dimensional) rather than the original min-max optimization.
We present the result for the $\ell_{\infty}$-adversarial attacks here. The algorithm for $\ell_2$-adversarial attacks is similar and described in~\Cref{ell2-attacks}.\pagebreak[4]
\begin{proposition}
    \label{thm:advtraining-closeform}
    The minimization \eqref{eq:adv-train-regression}, for the $\ell_\infty$-norm (i.e., ${\|\dx \|_{\infty} \le \delta}$) is exactly equivalent to\vspace{-3pt}{\scriptsize
    \begin{align} \label{original_optimization}
        \min_{\param,\boldsymbol{\eta}^{(i)}} ~&~\scriptsize\sum_{i=1}^n\frac{\left(y_i - \x_i^\top\param\right)^2}{\eta^{(i)}_0} + \delta\sum_{j=1}^\nfeatures \sum_{i=1}^n\frac{1}{\eta^{(i)}_j}\beta_j^2 + \epsilon  \sum_{j=1}^\nfeatures \sum_{j=1}^{n}\frac{1}{\eta_j^{(i)}}.\\
    \mathrm{subject~to}&~~~\boldsymbol{\eta}^{(i)}> 0 ~~\mathrm{and}~~\|\boldsymbol{\eta}^{(i)}\|_1 = 1, i = 1, \dots, n. \nonumber
    \end{align}}
    when  $\epsilon = 0$. When $\epsilon >0$, the solution tends to that of  \eqref{eq:adv-train-regression} as $\epsilon \rightarrow 0$.  The function being minimized, denoted $\mathcal{G}_{\infty}$, is a jointly convex function in $\param, \boldsymbol{\eta}^{(1)}, \dots\boldsymbol{\eta}^{(n)}$.
\end{proposition}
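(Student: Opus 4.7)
The plan is to reduce \eqref{eq:adv-train-regression} to \eqref{original_optimization} via two standard manipulations: first solve the inner maximization in closed form, then apply the ``$\eta$-trick'' variational identity to turn the resulting square of a sum of absolute values into a jointly convex expression in extended variables.

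For the first step, I would note that for $\ell_\infty$ perturbations,
\[
\max_{\|\dx\|_\infty \le \delta}\bigl(y_i - (\x_i+\dx)^\top\param\bigr)^2 = \bigl(|y_i - \x_i^\top\param| + \delta\|\param\|_1\bigr)^2,
\]
because $\max_{\|\dx\|_\infty \le \delta}|\dx^\top\param| = \delta\|\param\|_1$ and the sign of $\dx^\top\param$ can always be aligned against $y_i - \x_i^\top\param$. This reduces the original problem to minimizing $\sum_i (|y_i - \x_i^\top\param| + \delta\|\param\|_1)^2$ in $\param$ alone.

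Next I would invoke the variational identity (e.g., \citet{bach_eta-trick_2019}): for any non-negative $a_0,\dots,a_\nfeatures$,
\[
\Bigl(\sum_{k=0}^\nfeatures a_k\Bigr)^2 = \min_{\boldsymbol\eta > 0,\,\|\boldsymbol\eta\|_1 = 1}\sum_{k=0}^\nfeatures \frac{a_k^2}{\eta_k},
\]
with minimizer $\eta_k^\star = a_k/\sum_\ell a_\ell$ when that is well-defined. Taking $a_0 = |y_i - \x_i^\top\param|$ and $a_j = \delta|\beta_j|$ for each datapoint $i$, and then swapping $\min_\param$ with $\min_{\boldsymbol\eta^{(i)}}$ (legitimate because the constraints decouple across $i$ and the joint objective is convex, as checked below), gives exactly the structure of \eqref{original_optimization} with $\epsilon = 0$. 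The case $\epsilon > 0$ is then a direct regularization: the extra term $\epsilon\sum_j 1/\eta_j^{(i)}$ keeps the feasible iterates bounded away from the boundary; the minimizer of the $\epsilon$-problem converges to a minimizer of the $\epsilon = 0$ problem as $\epsilon\to 0$ by a standard epi-convergence/coercivity argument, using that the $\epsilon=0$ objective is continuous on the open simplex and lower semicontinuous at the boundary.

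Joint convexity of $\mathcal{G}_\infty$ would follow from the convexity of the perspective map $(u, t) \mapsto u^2/t$ on $\mathbb{R}\times(0,\infty)$: each summand $(y_i - \x_i^\top\param)^2/\eta_0^{(i)}$ and $\beta_j^2/\eta_j^{(i)}$ is the perspective of a linear function of $(\param, \boldsymbol\eta^{(i)})$, hence jointly convex, and the sum of convex functions is convex. The constraint set is a Cartesian product of (relative interiors of) simplices and is therefore convex. The main subtlety I expect is the boundary behavior: when some $\beta_j$ or residual $y_i - \x_i^\top\param$ vanishes, the optimal $\eta$-coordinate is zero, and the $\epsilon = 0$ objective is only finite in a limiting sense; the cleanest way to handle this is to define the summands via their lower semicontinuous extension $u^2/0 := 0$ if $u=0$ and $+\infty$ otherwise, so the exact equivalence in the proposition is with respect to this extension, and then check that the $\epsilon$-regularization yields a minimizing sequence that respects this convention.
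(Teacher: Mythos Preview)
Your proposal is correct and follows the same three-step structure as the paper: (i) solve the inner maximization to obtain $\sum_i(|y_i - \x_i^\top\param| + \delta\|\param\|_1)^2$, (ii) apply the $\eta$-trick to each summand with $a_0 = |y_i - \x_i^\top\param|$ and $a_j \propto |\beta_j|$, and (iii) verify joint convexity and swap the order of minimization. The only cosmetic difference is that you establish joint convexity via the perspective map $(u,t)\mapsto u^2/t$, whereas the paper uses an equivalent Schur-complement/LMI characterization of the epigraph; both are standard and yield the same conclusion, and your treatment of the $\epsilon\to 0$ boundary behavior is in fact more careful than the paper's.
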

Now, let us use $\Delta$ to denote the simplex $\Delta=\{\boldsymbol{\eta} \in \R^{p + 1}: \eta_t > 0, \sum_t \eta_t = 1\}$.
The following blockwise coordinate descent algorithm converges to the solution of the problem:\\
\begin{minipage}[t]{0.99\linewidth}
\begin{spreadlines}{-0.5em}
\small
\begin{align}
        \widehat{\param} &\leftarrow \text{arg} \min_{\param} \mathcal{G}_{\infty}(\param, \widehat{\boldsymbol{\eta}}^{(1)},\dots, \widehat{\boldsymbol{\eta}}^{(n)}), \label{eq:block_1}\\
    \widehat{\boldsymbol{\eta}}^{(1)},\dots, \widehat{\boldsymbol{\eta}}^{(n)} &\leftarrow \text{arg} \min_{\boldsymbol{\eta}^{(i)} \in \Delta, \forall i} \mathcal{G}_{\infty}(\widehat{\param}, \boldsymbol{\eta}^{(1)},\dots, \boldsymbol{\eta}^{(n)}). \label{eq:block_2}
    \end{align}
\end{spreadlines}
\end{minipage}

The convergence of the above algorithm follows from results about the convergence of block coordinate descent for problems that are jointly convex and differentiable with unique solution along each direction~\citep[Section 8.6]{luenberger_linear_2008}. This is guaranteed in our case for $\epsilon > 0$. In \Cref{irr} we will present closed formula expression for each minimization problems. But, before that, we present the proof for~\Cref{thm:advtraining-closeform}.

\begin{proof}[Proof of~\Cref{thm:advtraining-closeform}] We do it in \textit{three steps}: \emph{first}, we prove in \Cref{adv-train-reform} that \eqref{eq:adv-train-regression} is equivalent to 
\begin{equation}
    \label{eq:advtraining-closeform-linf}
    \min_{\param} \sum_{i=1}^n\left(|y_i - \x_i^\top\param| + \delta\|\param\|_1\right)^2.
\end{equation}\vspace{-2pt}
\textit{Second}, we show  (using \Cref{thm:eta-trick}) that $\left(|y_i - \x_i^\top\param| + \delta\|\param\|_1\right)^2$ is equal to:
\[\footnotesize  \min_{\boldsymbol{\eta}^{(i)} \in \Delta_{(p+1)}^n}\frac{\left(y_i - \x_i^\top\param \right)^2}{\eta^{(i)}_0}  + \delta \sum_{j=1}^p\frac{\beta_j^2}{\eta^{(i)}_j}+ \epsilon \sum_{t=0}^{T-1} \frac{1}{\eta_t}\]
This holds either either $\epsilon = 0$ or when $\epsilon \rightarrow 0$. Hence, we can rewrite our problem as:
\begin{equation}
\label{eq:advtraining-closeform-linf-eta-trick}
\min_{\param} \min_{\substack{\boldsymbol{\eta}^{(i)}\in \Delta_{(p+1)} \\i=1, \dots, n}}\mathcal{G}_{\infty}.
\end{equation}
In our \emph{third} and final step, we prove that $\mathcal{G}_{\infty}$ is jointly convex (see \Cref{joint-convexity}). Since this is a jointly convex function, \eqref{eq:advtraining-closeform-linf-eta-trick} is equivalent to \eqref{original_optimization}.\end{proof}\vspace{-10pt}
\subsection{Iterative ridge regression algorithm}\vspace{-5pt}
\label{irr}
Now, we discuss the solution of the sub-optimization problems that arise from the block optimization. If we define $w_i = \frac{1}{\widehat{\eta}^{(i)}_0}, 
    \gamma_j =  \sum_{i=1}^n\frac{1}{\widehat{\eta}^{(i)}_j}$, and $C = \epsilon  \sum_{j=1}^\nfeatures \sum_{j=1}^{n}\frac{1}{\eta_j^{(i)}}$ we have that
    \[\mathcal{G}_{\infty}(\param, \widehat{\boldsymbol{\eta}}^{(1)},\dots, \widehat{\boldsymbol{\eta}}^{(n)})=\sum_{i=1}^{\ntrain} w_i(y_i - \x_i^\top\param)^2 + \sum_{j=1}^{\nfeatures}  \gamma_j\beta_j^2 + C.\] Hence, equation \eqref{eq:block_1} in the block coordinate optimization is equivalent to the first step in \Cref{alg:irrr}. And we define $\mathtt{UpdateWeights}$ to be
\begin{equation}
\label{eq:update_weights}\small
w_i  \leftarrow \frac{|y_i - \x_i^\top\widehat \param| + \|\param\|_1}{|y_i - \x_i^\top\widehat \param|}\text{ and }\gamma_j \leftarrow \sum_{i=1}^n\frac{|y_i - \x_i^\top\widehat \param| + \|\param\|_1}{|\widehat \beta_j|}
\end{equation}
This expression comes from the closed-formula solution for
$\min_{\boldsymbol{\eta}^{(i)} \in \Delta, \forall i} \mathcal{G}_{\infty}(\widehat{\param}, \boldsymbol{\eta}^{(1)},\dots, \boldsymbol{\eta}^{(n)}),$ obtained using \Cref{thm:eta-trick} bellow---sometimes referred to as  $\eta$-trick \citep{bach_eta-trick_2019, bach_optimization_2011}. 
\begin{proposition}[$\eta$-trick]
\label{thm:eta-trick}
For any $\epsilon > 0$ and any $a_t$ we have that for $\tilde a_t =\sqrt{a_t^2 + \epsilon}$:
    \begin{equation*}
    \left(\sum_{t=1}^n \tilde a_t\right)^2 = \min_{\boldsymbol{\eta} \in \Delta}  \sum_{t=0}^{T-1} \frac{a_t^2}{\eta_t} + \epsilon \sum_{t=0}^{T-1} \frac{1}{\eta_t}, 
    \end{equation*}
    which is minimized \emph{uniquely} at  
    \[\widehat \eta_t = \frac{\tilde a_t}{\sum_{t} \tilde a_t}, t = 0, \dots\]
Furthermore, if $a_t > 0$, for $t = 1, \dots, T$ then the result also holds for $\epsilon = 0$.
\end{proposition}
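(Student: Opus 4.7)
The plan is to reduce the claim to the Cauchy--Schwarz inequality applied in a carefully weighted way. For any $\boldsymbol{\eta}\in\Delta$, using $\sum_t\eta_t=1$ and the positivity of each $\eta_t$, we can write
\begin{equation*}
\Bigl(\textstyle\sum_t \tilde a_t\Bigr)^{\!2} = \Bigl(\textstyle\sum_t \sqrt{\eta_t}\cdot \tfrac{\tilde a_t}{\sqrt{\eta_t}}\Bigr)^{\!2}\;\le\;\Bigl(\textstyle\sum_t \eta_t\Bigr)\Bigl(\textstyle\sum_t \tfrac{\tilde a_t^{\,2}}{\eta_t}\Bigr)\;=\;\textstyle\sum_t \tfrac{a_t^2+\epsilon}{\eta_t},
\end{equation*}
which immediately gives the inequality $\bigl(\sum_t \tilde a_t\bigr)^2\le\min_{\boldsymbol{\eta}\in\Delta}\sum_t (a_t^2+\epsilon)/\eta_t$. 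This handles the easy direction.

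For the matching upper bound and to identify the minimizer, I would use the equality condition in Cauchy--Schwarz: the inequality above is tight precisely when $\sqrt{\eta_t}$ is proportional to $\tilde a_t/\sqrt{\eta_t}$, i.e.\ $\eta_t\propto \tilde a_t$. The simplex constraint $\sum_t\eta_t=1$ fixes the normalization to the candidate $\widehat \eta_t=\tilde a_t/\sum_s \tilde a_s$. Substituting this $\widehat{\boldsymbol{\eta}}$ back into $\sum_t(a_t^2+\epsilon)/\eta_t=\sum_t \tilde a_t^2/\eta_t$ telescopes to $\bigl(\sum_t\tilde a_t\bigr)^2$, so the inequality is attained and the stated closed form is a minimizer.

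For uniqueness, I would argue by strict convexity: when $\epsilon>0$ every coefficient $a_t^2+\epsilon$ is strictly positive, so $\boldsymbol{\eta}\mapsto\sum_t (a_t^2+\epsilon)/\eta_t$ is a sum of strictly convex functions of a single coordinate $\eta_t$ on $(0,\infty)$, hence strictly convex on the relative interior of $\Delta$; combined with the linear equality constraint this forces the critical point $\widehat{\boldsymbol{\eta}}$ to be the unique minimizer. The last sentence of the proposition then follows from the same argument verbatim: if $a_t>0$ for every $t$, then $a_t^2>0$ and the strict convexity argument goes through with $\epsilon=0$, so both the identity and uniqueness extend to that boundary case.

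I do not expect any real obstacle. The only care needed is to keep the analysis on the open simplex (so that $1/\eta_t$ is well defined and strictly convex); the assumption $\boldsymbol{\eta}>0$ in the constraint and the positivity of the coefficients $\tilde a_t^2$ make this automatic, and the proof is otherwise a textbook Cauchy--Schwarz computation.
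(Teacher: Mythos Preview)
Your proof is correct and takes a genuinely different route from the paper. The paper substitutes out one coordinate via the constraint $\sum_t\eta_t=1$, sets the gradient of the reduced objective to zero to locate the stationary point, and then computes the Hessian explicitly to establish strong convexity and hence uniqueness. You instead recognize the objective $\sum_t \tilde a_t^2/\eta_t$ as the right-hand side of a weighted Cauchy--Schwarz inequality, which gives the value of the minimum and the form of the minimizer in one stroke via the equality case; your uniqueness argument by strict convexity of $\eta\mapsto c/\eta$ for $c>0$ is essentially equivalent to the paper's Hessian computation but avoids writing it out. Your approach is shorter and more conceptual; the paper's approach is more mechanical but perhaps makes it clearer to a reader unfamiliar with the trick why the specific form $\widehat\eta_t\propto\tilde a_t$ emerges. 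One cosmetic remark: the substitution of $\widehat{\boldsymbol\eta}$ does not really ``telescope'' --- it just simplifies by cancellation --- but this does not affect correctness.
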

This result is proved in \Cref{proof-eta-trick}. Here expression \eqref{eq:update_weights} is for $\epsilon = 0$ and $a_0 = |y_i - \x_i^\top\widehat \param|$ and $a_i = |\widehat \beta_i|$. In practice, we might use a small $\epsilon> 0$ to guarantee that the denominator is not zero (see practical considerations), a similar expression follows from the proposition in this case.

\subsection{Pratical considerations and improvements on the basic algorithm}
\label{sec:improvements-irr}

\textbf{Numerical stability.} In practice,  using $\epsilon>0$ in \Cref{thm:eta-trick} is important to avoid numerical problems and to guarantee convergence.  Otherwise, when the coefficients $a_t$ are close to zero, we could obtain very large values for $1/\eta$ and an ill-conditioned problem. This happens often during adversarial training that both can yield solutions that interpolate the training data, i.e., $|y_i - \x_i \param| = 0$ for all $i$---see~\citet{ribeiro_regularization_2023}.  And also yields sparse solutions---i.e., $\param_j = 0$ for some values of $j$---for $\ell_\infty$-adversarial attacks.  In our experiments, we use $\epsilon=10^{-20}$.

\textbf{Computational cost.} We suggest the use of different implementations depending on weather $n > p$ or not.
Let us introduce the notation $\mm{W} = \text{diag}(w_1, \dots, w_n)$
and $\mm{\Gamma} =\text{diag}(\gamma_1, \dots, \gamma_p)$.  The weighted ridge regression (Step 1 in \Cref{alg:irrr}) has a closed formula solution
\begin{align}
\label{eq:ridge_solution}
\param^{\text{ridge}} &= (\mm{X}^\top \mm{W} \mm{X} + \delta \mm{\Gamma})^{-1} \mm{X}^\top \mm{W}  \vv{y} \\&\labelrel{=}{ridge_solution:inversionlemma} \mm{\Gamma}^{-1} \mm{X}^T (\mm{X} \mm{\Gamma} \mm{X}^\top  + \delta  \mm{W}^{-1})^{-1} \vv{y}.
\end{align}
Where the alternative formula \eqref{ridge_solution:inversionlemma} is obtained using the matrix inversion lemma. The computational cost of computing the solution using the first formulation (using Cholesky decomposition for inverting the matrix) is $\bigO(\nfeatures^2\ntrain + \nfeatures^3)$, while, using the alternative formulation $\bigO(\ntrain^2\nfeatures + \ntrain^3)$. Hence, one should use the first when  ${\nfeatures < \ntrain}$ and the second when ${\nfeatures > \ntrain}$. See Appendix~\ref{CVXPY} for a comparison with other alternatives.

\begin{figure}
    \centering
    \includegraphics[width=0.48\textwidth]{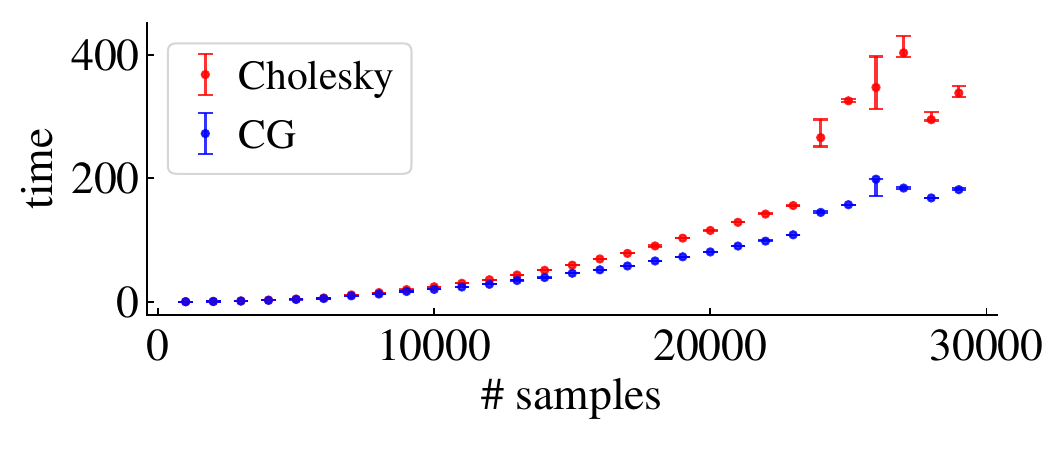}\vspace{-5pt}
    \includegraphics[width=0.48\textwidth]{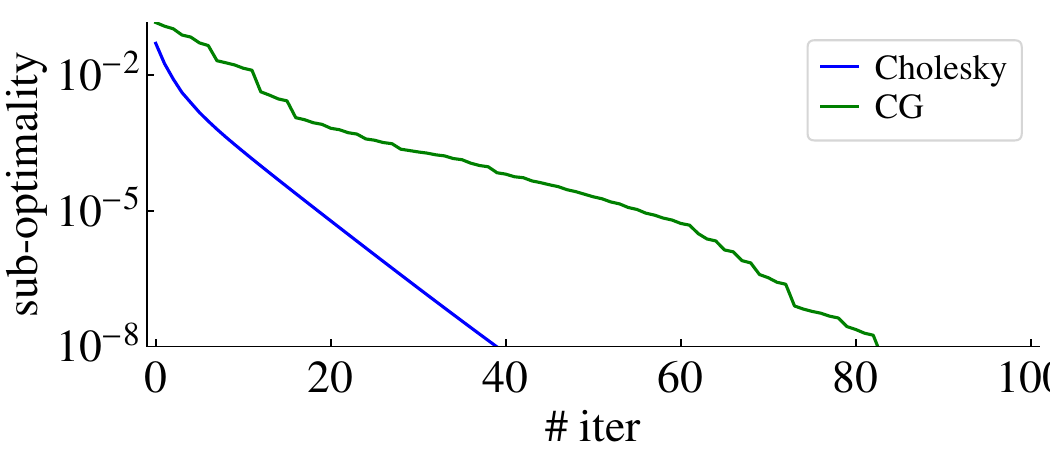}
    \caption{\emph{Conjugate Gradient.} \emph{Top:} execution time of 100 iterations in Abalone dataset.  \emph{Bottom:} convergence---suboptimality \emph{vs} iteration.}
    \label{fig:adv-train-icg}
\end{figure}
\textbf{Conjugate gradient implementation.}
As an improvement of~\Cref{alg:irrr}, we propose the use of conjugate gradient (CG) to approximately solve the reweighted ridge regression problem. As we discussed, the computation cost of the first step (solving ridge regression) of \Cref{alg:irrr} is cubic in $(\ntrain, \nfeatures)$ while the computational cost of the second step (computing weights) is only quadratic. A natural idea is to solve the ridge regression subproblem approximately with a computational cost that is also quadratic. In this manner, the computational effort is saved by not solving exactly a subproblem that needs to be updated.

The solution of~\eqref{eq:ridge_solution} is equivalent to the solution of the linear system $\mm{A} \param = \vv{b}$ for
\[\mm{A} = \mm{X}^\top \mm{W} \mm{X} + \delta \mm{\Gamma},~~ \vv{b} = \mm{X}^\top \mm{W}  \vv{y}\] as we mentioned in the previous section, the cost of solving this linear system exactly---for instance, using Cholesky or SVD decomposition---is $\bigO(\nfeatures^2\ntrain + \nfeatures^3)$. We propose the use of preconditioned conjugate gradient descent as an alternative for solving it (for large-scale problems). The algorithm  converges to the solution as \citep[p.117]{nocedal_numerical_2006}:
\[\|\param^{(k)} - \param^*\|_{\mm{A}} \le 2 \left(\frac{\sqrt{\kappa} -1} {\sqrt{\kappa} +1}\right)^k\|\param^{(0)} - \param^*\|_{\mm{A}}.\]
Where denote by $\kappa$ the condition number of the matrix $\mm{M}^{-1} \mm{A}$ and $\mm{M}$ is the  preconditioning matrix.
Our choice of preconditioner in the numerical experiments is the diagonal elements of $\mm{A}$, i.e.,
$\mm{M} = \text{diag}{\mm{A}}$. This way $\mm{M}^{-1}$ is inexpensive to compute keeping the overall cost per CG iteration low, while still improving convergence, by improving the conditioning.

\begin{figure} 
    \centering
    \includegraphics[width=0.48\textwidth]{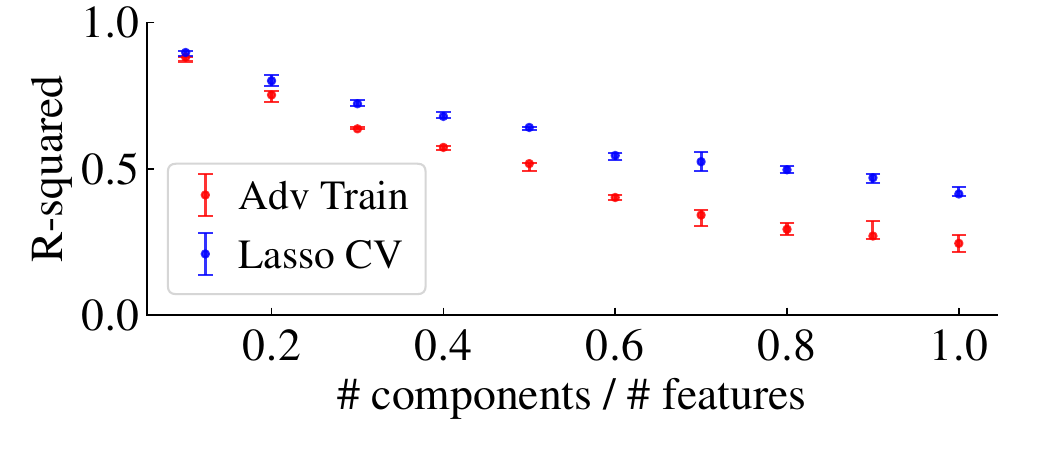}
    \includegraphics[width=0.48\textwidth]{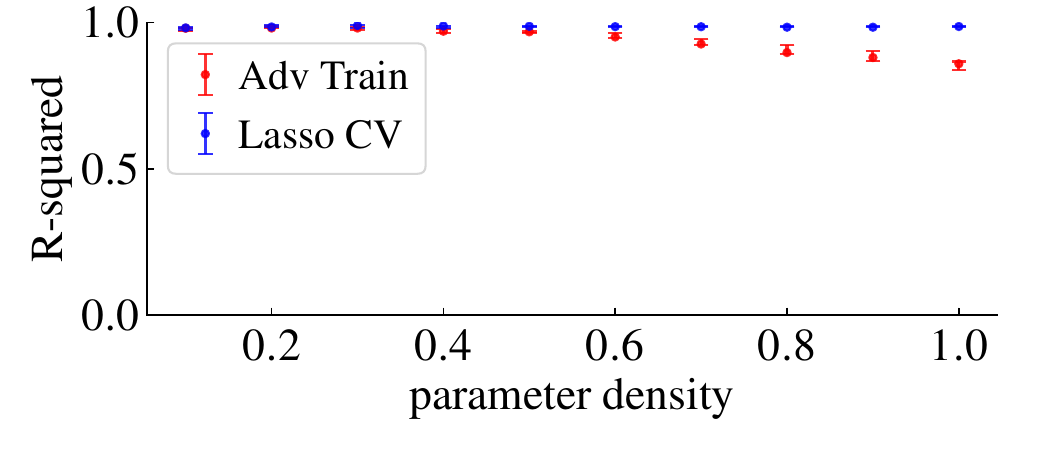}\vspace{-5pt}
    \caption{\emph{Test performance.} \emph{Top:} Spiked eigenvalue models: performance \emph{vs} fraction  of relevant eigenvalues components. \emph{Bottom:} Sparse parameter model: performance \emph{vs} fraction of non-zero parameters. $R$-squared is the coefficient of determination (higher is better).}
    \label{fig:syntetic_data} 
\end{figure}

\section{\uppercase{Numerical experiments}}
\label{numerical-experiments}

The numerical experiments aim to evaluate the \textbf{convergence} (suboptimality \emph{vs} iteration); \textbf{execution time}; \textbf{test performance}; and, \textbf{adversarial performance}. 

 We evaluate the test performance using the default value $\delta$ equals to the 95\% percentile of ${\|\vv{\varepsilon}^\top X\|/\|\vv{\varepsilon}\|_1}$ for $\vv{\varepsilon}$ standard Gaussian (this can be easily computed by simulation). This choice follows from~\citet{ribeiro_regularization_2023} and yields zero coefficients for random outputs and near-oracle prediction error (see \Cref{sec:default-value}). For adversarial performance, we evaluate in the test set, but consider that the test set has been perturbed by adversarial attacks with an adversarial radius of $\delta^{\rm eval}$ (not necessarily the same as in the training $\delta$). 

 In our evaluation, we consider 13 \emph{real datasets} (7 for regression and 6 for classification) and \emph{three types of Gaussian synthetic datasets} for regression, they are detailed in~\Cref{sec:dataset-description}.
 
\begin{itemize}
    \item \textbf{Convergence:} for \textbf{\textit{classification}}, we display in~\Cref{fig:adv-train-pgd}  suboptimality \emph{vs} iteration in the MNIST and in~\Cref{fig:convergence_classif} for Breast cancer and MAGIC-C datasets.  For \textbf{\textit{regression}}, in \Cref{fig:adv-train-icg} (\emph{bottom}) in the Abalone dataset and in~\Cref{fig:convergence_regression} for Wine and Diabetes.   In~\Cref{fig:baseline_gd}, we compare our algorithm with baseline implementation where we solve \Cref{eq:advtraining-closeform-linf} using standard gradient descent algorithm.
    \item \textbf{Execution time}: For \textbf{\textit{regression}}, we compare our execution time in the MAGIC dataset with CVXPY (\Cref{fig:adv-train-linear}(\emph{left})). In~\Cref{fig:adv-train-icg} (\emph{top}), we compare execution time for 100 iterations in synthetic Gaussian data for \textbf{\textit{regression}}. In~\Cref{fig:exectime_classification}, for our \textbf{\textit{classification}} methods.
    \item \textbf{Test performance:} In \Cref{fig:adv-train-linear}(\emph{left}), we compare the performance of $\ell_{\infty}$-adversarial training using default value against cross-validated lasso in different \textbf{\textit{regression}} \emph{real} datasets. An extended comparison in the same setup is provided in~\Cref{tab:performance_regression_all}. \Cref{fig:syntetic_data} illustrate for regression the performance in \emph{syntetic datasets} for varying density and of the number of relevant principal components. \Cref{fig:varying_n_vs_p} illustrate the performance over different fractions $n/p$. 
    We compare the performance of adversarial training for \textbf{\textit{classification}} in~\Cref{fig:performance_classification}.
    \item \textbf{Adversarial performance:}  we also compare (in the regression setting) adversarial training and Lasso when evaluated under adversarial attacks. For all scenarios of \Cref{fig:adv-train-linear}(a), we implemented and evaluated the same models under adversarial attacks. The results are displayed in the~\Cref{tab:adv_evaluation} for attacks with an adversarial radius of $\delta^{\rm eval} = 0.2$. \Cref{fig:syntetic_data_adv} illustrate for regression the performance in \emph{syntetic datasets} for varying adversarial radius of $\delta^{\rm eval}$. Overall, we observe that as expected adversarially trained models have a significant advantage over lasso in the presence of adversarial attacks.
\end{itemize}

\begin{table}[]
    \centering
    \caption{\emph{Adversarially evaluated performance in real datasets}. The setting is the same as~\Cref{fig:adv-train-linear}(a), but evaluated in an adversarially modified testset with an adversarial radius of $\delta^{\rm eval} = 0.2$.}
    \label{tab:adv_evaluation} 
    \begin{tabular}{lcc}
        \toprule
        Dataset & Adv Train  & Lasso CV  \\
        \midrule
        abalone     & 0.18  & -0.37 \\
        diabetes    & 0.20  & 0.04  \\
        diamonds    & 0.64  & 0.61  \\
        house sales & 0.43  & 0.39  \\
        pollution   & 0.64  & 0.51  \\
        us crime   & 0.47  & 0.40  \\
        wine        & 0.14  & 0.02  \\
        \bottomrule
    \end{tabular}\vspace{-10pt}
\end{table}

Our evaluation shows  reliable \emph{convergence} and efficient \emph{execution time} across various scenarios, including high-dimensional problems. Based on these experiments, we recommend the following general guidelines: for \textbf{\textit{regression}} problems, use the conjugate gradient method when we have \emph{both} a large number of parameters $p$ and training samples $n$, and the Cholesky method otherwise. For \textbf{\textit{classification}} problems, if the number of training samples $n$ is large, we suggest the use of SAGA, otherwise, we suggest using AGD with backpropagation.

The experiments also show good out-of-the-box \emph{test performance} in real datasets with default parameters.
Synthetic data experiments provide some intuition for the good practical performance of these methods. These experiments suggest adversarial training performs on par with a cross-validated lasso for data generated with \emph{sparse parameters} or \emph{spiked eigenvalue distributions} (see \Cref{fig:syntetic_data}), which is common in many real-world problems.

\begin{figure}\vspace{-10pt}
    \centering 
    \includegraphics[width=0.48\textwidth]{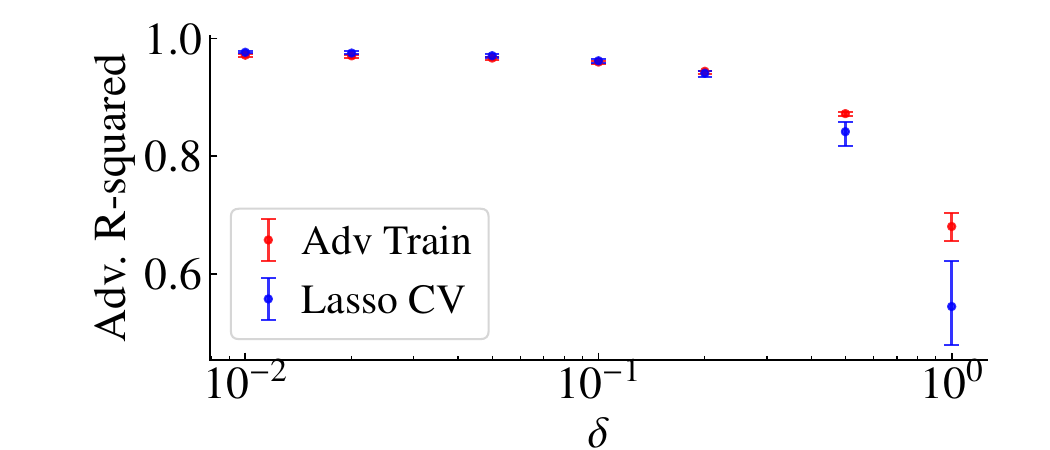}\vspace{-5pt}
    \caption{\emph{Adversarially evaluated performance in a syntetic dataset}. We consider spiked eigenvalue models, with varying evaluation adversarial radius $\delta^{\rm eval}$. }
    \label{fig:syntetic_data_adv}
    \vspace{-10pt}
\end{figure}
\vspace{-10pt}
\section{\uppercase{Conclusion}}
\label{conclusion}

We propose tailored solvers for linear  adversarial training. We hope this will enable new applications in large-scale problems like polygenic risk scores~\citep{torkamani_personal_2018}, proteomic analysis~\citep{safo_derivation_2023}, and Mendelian randomization~\citep{burgess_robust_2020}. 
Introducing a competing approach to Lasso that promotes sparsity in feature selection and delivers strong out-of-the-box performance with default values, even without using cross-validation.

It is a natural line to extend the method to different loss functions and other $\ell_p$-norms. Of special interest is the case of multiclass problems. Traditional strategies (i.e., one-vs-rest and one-vs-one) can be used here. However, providing a solver for an adversarial formulation of multinomial logistic regression requires extending \Cref{thm:rewriting-adv-error} and some of our other results to the multi-output case.

While we do not directly address neural network training, we hope our solutions can inspire work in this area. For example, given our promising results in the linear case, adapting the method proposed in \Cref{classification} for deep neural networks could be a valuable direction for future research.  Another promising approach to handling nonlinearity could involve studying adversarial kernel regression, where reformulating the problem as iterative ridge regression with the $\eta$-trick enables the application of the kernel trick to explore this line.

\acknowledgments{
TBS and DZ are financially supported by the Swedish Research Council, with the projects Deep probabilistic regression–new models and learning algorithms (project number 2021-04301) and Robust learning methods for out-of-distribution tasks (project number 2021-05022); and, by the Wallenberg AI, Autonomous Systems and Software Program (WASP) funded by Knut and Alice Wallenberg Foundation. TBS, AHR and DZ are partially funded by the Kjell och Märta Beijer Foundation. FB by the National Research Agency, under the France
2030 program with the reference ``PR[AI]RIE-PSAI” (ANR-23-IACL-0008). 
}


\newpage
\appendix


\setcounter{equation}{0}
\renewcommand{\theequation}{S.\arabic{equation}}%

\setcounter{figure}{0}
\renewcommand{\thefigure}{S.\arabic{figure}}%

\setcounter{table}{0}
\renewcommand{\thetable}{S.\arabic{table}}%

\setcounter{section}{0}

\onecolumn
\pagenumbering{roman} 

\part{}\pagestyle{empty}

\aistatstitle{Supplementary Materials}\vspace{-40pt}
\parttoc 
\setcounter{page}{0 }
\newpage
\pagestyle{fancy}
\section{Adversarial training in linear models}
\label{background-apendix}
Here we present some basic results for adversarial training in linear models. 

\subsection{Reformulation for arbitrary loss functions}

\label{adv-train-reform}
The following theorem allows us to rewrite the inner maximization problem in~\Cref{eq:advtrain}. \Cref{thm:rewriting-adv-error} as we describe here is stated and proved in \citet{ribeiro_regularization_2023}. The same result but specifically for regression (and not in its general format) is presented and proved in~\citep{ribeiro_overparameterized_2023,xing_generalization_2021,javanmard_precise_2020}. And, for classification, in~\citep{goodfellow_explaining_2015,yin_rademacher_2019}.

\begin{theorem}[\citet{ribeiro_regularization_2023}, Section 8]
\label{thm:rewriting-adv-error}
Let $\loss: \R \times \R \rightarrow \R$, such that $\ell(y, \cdot)$ is convex and lower-semicontinuous (LSC) function for every $y$. Than for every $\delta\ge 0$,
\vspace{-1pt}
\begin{equation}
    \label{eq:adversarial-risk}
    \max_{\|\dx\| \le \delta} \loss\left(y, (\x + \dx)^\top \param\right) =  \max_{s \in \{-1, 1\}}  \loss\left(y, \x^\top \param  + \delta s  \|\param\|_* \right).
\end{equation}
\end{theorem}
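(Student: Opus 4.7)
My plan is to reduce the inner maximization to a one-dimensional problem and then exploit convexity of the loss in its second argument. Fix the training point $(\x,y)$ and set $g(z) = \ell(y, z)$, so by hypothesis $g$ is a convex, LSC function on $\R$. The inner maximization rewrites as
\begin{equation*}
\max_{\|\dx\|\le\delta} g\bigl(\x^\top\param + \dx^\top\param\bigr),
\end{equation*}
and the only thing $\dx$ contributes through is the scalar $u = \dx^\top\param$. So the first task is to determine exactly the set $I := \{\dx^\top\param : \|\dx\|\le\delta\}$.

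By definition of the dual norm, $\sup_{\|\dx\|\le\delta}\dx^\top\param = \delta\|\param\|_*$ and $\inf_{\|\dx\|\le\delta}\dx^\top\param = -\delta\|\param\|_*$; in finite dimension this supremum is attained at some $\dx^\star$ with $\|\dx^\star\|\le 1$ and $(\dx^\star)^\top\param = \|\param\|_*$. Then for every $t\in[-1,1]$ the vector $t\delta\dx^\star$ is feasible and realises $t\delta\|\param\|_*$, which shows $I$ contains (and by the extremal property equals) the closed interval $[-\delta\|\param\|_*,\delta\|\param\|_*]$. The inner max is therefore equal to
\begin{equation*}
\max_{u \in [-\delta\|\param\|_*,\,\delta\|\param\|_*]} g\bigl(\x^\top\param + u\bigr).
\end{equation*}

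Now I invoke convexity. Any $u$ in this interval can be written $u = \lambda(-\delta\|\param\|_*) + (1-\lambda)(\delta\|\param\|_*)$ for some $\lambda\in[0,1]$, and convexity of $g$ together with the fact that a convex combination is bounded above by the maximum of the summands gives
\begin{equation*}
g(\x^\top\param + u) \le \max_{s\in\{-1,1\}} g\bigl(\x^\top\param + s\delta\|\param\|_*\bigr).
\end{equation*}
The reverse inequality is immediate since $\pm\delta\|\param\|_*$ lie in the interval. This yields the claimed identity. The lower semicontinuity of $g$ is only needed to make sure the outer $\max$ (written as $\max$, not $\sup$) is attained, which it is trivially here since the feasible set on the right is the two-point set $\{-1,+1\}$.

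The main thing to be careful about is the step showing $I$ is the \emph{whole} interval rather than just a subset containing its endpoints --- in infinite dimension the dual norm need not be attained, but in the finite-dimensional setting of the paper the attainment plus scaling argument above closes the gap. Everything else (the reduction to $u$ and the convex-function-on-an-interval argument) is routine.
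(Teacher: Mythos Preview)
Your argument is correct: reduce to the scalar $u=\dx^\top\param$, identify its range as the interval $[-\delta\|\param\|_*,\delta\|\param\|_*]$ via the dual norm and a scaling/attainment argument, then use that a convex function on a closed interval attains its maximum at an endpoint. The paper does not give its own proof of this statement but defers to \citet{ribeiro_regularization_2023}; your approach is the standard one and matches what that reference does. One minor remark: lower semicontinuity is in fact not needed anywhere in your argument (attainment on the left follows from your endpoint identification, and the right-hand side is a finite set), so your closing comment could simply note that convexity alone suffices.
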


We can find a closed-formula expression for
$s_* = \arg\max_{s \in \{-1, 1\}} \loss(\param^\top x  + \delta s  \|\param\|_* )$ and for  the righthand side of~\eqref{eq:adversarial-risk} both for \emph{regression} and \emph{classification} problems. 

\paragraph{Regression.} Let $\loss(y, \x^ \top \param) = f(|\x^\top \param - y|)$ where $f: \R^+ \rightarrow \R^+$ is a non-decreasing LSC convex function. If $f$ is LSC and convex so will be $\loss$ and the result of the theorem holds. Moreover, we have that $s_* = - \sign(y - \x^ \top \param) $  and 
\[\max_{\|\dx\| \le \delta}f(|(\x + \dx)^\top \param - y|) = f(|y - x^ \top \param| + \delta \|\param\|_*).\]
along the text we apply this result for $f(z) = z^2$, but other options are possible, i.e, $f(z) = |z| $

\paragraph{Classification.} Let $y \in \{-1, 1\}$ and $\loss(y, \x^ \top \param) = h(y(\x^\top \param))$ where $h$ is a non-increasing LSC function.  If $h$ is LSC and convex so will be $\loss$ and the result of the theorem holds.Moreover we have that $s_* = -y$ and we obtain:
\[\max_{\|\dx\| \le \delta}\ell(y((\x + \dx)^\top \param)) = \ell(y (\x^ \top \param) - \delta \|\param\|_*).\]
In the main text we focus on $h(z) = \log\left(1 + e^{-z}\right)$, but other alternatives are possible, i.e. $h(z) = (1 - y \hat{y})_+^2$.

\paragraph{Note.} In \Cref{thm:advtrain-classif-closeform}, we require $h$ to be smooth. Note that this is a sufficient condition for $h$ to be LSC but not necessary. Smoothness is required only for the second result of the proposition.

\subsection{Default choice of value}
\label{sec:default-value}
As we mentioned in the main text, adversarial training in linear regression can be formulated as the min-max problem: \[  \min_{\param}\frac1n \sum_{i=1}^\ntrain{\max_{\|\dx_i\| \le \delta} \big(y_i- (\x_i + \dx_i)^\top\param\big)}^2.\]

In this paper, we use the following rule-of-thumb for choosing the parameter for adversarial training
\[\delta^{\rm default} = P_{95}\big(\frac{\|X^{\top}\vv{\varepsilon}\|}{\|\vv{\varepsilon}\|}\big) \text{ for } \vv{\varepsilon}\sim \N(0, I), \] as the default option. Where we use $P_{95}$ to denote the 95\% percentile. Note that this result can be easily approximated using a Monte Carlo simulation. This rule of thumb is a consequence of two theoretical results that we summarize next and have been proved by \citet{ribeiro_regularization_2023}. 

\paragraph{Zero under sub-gaussian noise.} The first theoretical result is that this default value yields (with high-probability) zero parameters for the case the output is just sub-gaussian noise.
\begin{proposition}[Zero solution of adversarial training]
\label{thm:zero-solution}
The zero solution $\eparam = \vv{0}$ minimizes the adversarial training if and only if $\delta \ge  \frac{\|\X^\top \y\|}{\|\y\|_1}$.
\end{proposition}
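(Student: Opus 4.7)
The plan is to exploit convexity of the adversarial risk together with the reformulation from Theorem~\ref{thm:rewriting-adv-error}. Applying that theorem to the squared loss gives
\[
L(\param) \;=\; \frac{1}{n}\sum_{i=1}^n\bigl(|y_i - \x_i^\top\param| + \delta\|\param\|_*\bigr)^2,
\]
which is convex in $\param$. Since $L$ is convex, $\param = \vv{0}$ is a global minimizer if and only if the one-sided directional derivative $L'(\vv{0};\vv{v}) \ge 0$ for every direction $\vv{v}\in\R^p$. So the whole proof reduces to computing this directional derivative and translating it into a dual-norm inequality.

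Next I would compute $L'(\vv{0};\vv{v})$ explicitly. For fixed $\vv{v}$ and small $t>0$, write $g_i(t) = |y_i - t\x_i^\top\vv{v}| + t\delta\|\vv{v}\|_*$, so $L(t\vv{v}) = \tfrac{1}{n}\sum_i g_i(t)^2$. For indices with $y_i \ne 0$, $g_i(0) = |y_i|$ and $g_i'(0^+) = -\sign(y_i)\x_i^\top\vv{v} + \delta\|\vv{v}\|_*$; indices with $y_i = 0$ contribute $g_i(t)^2 = O(t^2)$ and drop out of the linear term. Expanding $g_i(t)^2 = y_i^2 + 2 y_i \cdot \sign(y_i)\cdot g_i'(0^+) \cdot t + o(t)$ and summing gives
\[
L'(\vv{0};\vv{v}) \;=\; \frac{2}{n}\bigl(\delta \|\y\|_1\, \|\vv{v}\|_* \;-\; (\X^\top\y)^\top\vv{v}\bigr).
\]

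The condition $L'(\vv{0};\vv{v}) \ge 0$ for all $\vv{v}$ is therefore equivalent to
\(
(\X^\top\y)^\top\vv{v} \le \delta \|\y\|_1\, \|\vv{v}\|_*
\)
for every $\vv{v}$. Taking the supremum over $\|\vv{v}\|_* \le 1$ and using that the norm dual to $\|\cdot\|_*$ is $\|\cdot\|$ (since the adversary's constraint $\|\dx_i\|\le\delta$ is what induced $\|\param\|_*$ in the first place) yields $\sup_{\|\vv{v}\|_*\le 1}(\X^\top\y)^\top\vv{v} = \|\X^\top\y\|$. Thus both directions of the ``if and only if'' collapse to $\|\X^\top\y\| \le \delta\|\y\|_1$, i.e., $\delta \ge \|\X^\top\y\|/\|\y\|_1$, which is exactly the claim. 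The degenerate case $\y = \vv{0}$ is handled separately: then $L(\param) \ge 0 = L(\vv{0})$ trivially.

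The only subtle step is justifying the directional-derivative computation when some $y_i$ vanish and, more importantly, checking that the dual-norm identification is the right one: the norm appearing on the right-hand side of the proposition must be the norm $\|\cdot\|$ that constrains the adversarial perturbation (the dual of $\|\cdot\|_*$). This is where I would be most careful, since a sign or duality slip here would change the statement.
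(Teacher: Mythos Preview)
The paper does not actually prove this proposition; it simply states it and attributes the proof to \citet{ribeiro_regularization_2023}. So there is no in-paper argument to compare against.

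Your argument is correct and is the natural one: reformulate via Theorem~\ref{thm:rewriting-adv-error}, use convexity to reduce global optimality at $\vv{0}$ to nonnegativity of all one-sided directional derivatives, compute $L'(\vv{0};\vv{v}) = \tfrac{2}{n}\bigl(\delta\|\y\|_1\|\vv{v}\|_* - (\X^\top\y)^\top\vv{v}\bigr)$, and then invoke biduality $\|\cdot\|_{**}=\|\cdot\|$ to turn the universal inequality into $\|\X^\top\y\|\le\delta\|\y\|_1$. The expansion of $g_i(t)^2$ and the handling of indices with $y_i=0$ (which contribute $O(t^2)$) are both fine. The only cosmetic point is that your intermediate line ``$g_i(t)^2 = y_i^2 + 2 y_i\cdot\sign(y_i)\cdot g_i'(0^+)\cdot t + o(t)$'' is just $y_i^2 + 2|y_i|g_i'(0^+)t + o(t)$, which is exactly the first-order Taylor expansion of $g_i(t)^2$ at $t=0$; writing it that way would be slightly cleaner. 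Your caution about the duality identification is well placed but the conclusion is right: the norm on $\X^\top\y$ in the statement is the adversary's norm $\|\cdot\|$, recovered as the dual of $\|\cdot\|_*$.
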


The resul implies that if $\vv{y}=\vv{\varepsilon}\sim \N(0, I)$---i.e., the output is just noise and not connected with the input in any way---then with (high probability) we have $\delta > \frac{\|\X^\top \y\|}{\|\y\|_1}$ and the above proposition result implies a zero solution.

\textbf{Near-oracle performance.} The second result is that setting $\delta \propto \frac{\|\X^\top \ee \|_\infty}{\|\ee \|_1}$  yields near oracle performancefor $\ell_\infty$-adversarial training. Assume that the data was generated as: $y_i = \x_i^\top\param^* + \e_i$. Under these assumptions, we can derive an upper bound for the (in-sample) prediction error:
where $\param^*$ is the parameter vector used to generate the data.
\begin{theorem}
\label{thm:pred-error-slowrate}
Let $\delta > \delta^* = 3 \frac{\|\X^\top \ee \|_\infty}{\|\ee \|_1}$, the prediction error of $\ell_\infty$-adversarial training satisfies the bound:
\begin{equation}
\label{eq:pred-error-slowrate}
\tfrac{1}{n}\|\X (\eparam - \param^*)\|_2^2  \le  8 \delta\|\param^*\|_1  \left(\tfrac{1}{n}\|\ee\|_1  + 10 \delta \|\param^*\|_1\right).
\end{equation}
\end{theorem}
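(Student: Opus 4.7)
The plan is to start from the basic inequality induced by the optimality of $\eparam$ and convert it into a quadratic bound on $\tfrac{1}{n}\|\X\vv h\|_2^2$, where $\vv h = \eparam - \param^*$. Using the risk reformulation $L(\beta) = \tfrac{1}{n}\sum_i (|y_i - \x_i^\top \beta| + \delta\|\beta\|_1)^2$ from~\Cref{thm:rewriting-adv-error}, the inequality $L(\eparam) \le L(\param^*)$ (together with $y_i - \x_i^\top \eparam = \ee_i - \x_i^\top \vv h$) yields, after the $\tfrac{1}{n}\|\ee\|_2^2$ terms cancel,
\[
\tfrac{1}{n}\|\X\vv h\|_2^2 + \tfrac{2\delta\|\eparam\|_1}{n}\|\ee - \X\vv h\|_1 + \delta^2 \|\eparam\|_1^2 \le \tfrac{2}{n}\ee^\top \X\vv h + \tfrac{2\delta\|\param^*\|_1}{n}\|\ee\|_1 + \delta^2 \|\param^*\|_1^2.
\]

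Next I would apply Hölder's inequality $\ee^\top \X\vv h \le \|\X^\top \ee\|_\infty \|\vv h\|_1$ together with the noise-control hypothesis $\|\X^\top \ee\|_\infty < \tfrac{\delta}{3}\|\ee\|_1$ and the triangle bound $\|\vv h\|_1 \le \|\eparam\|_1 + \|\param^*\|_1$, so that $\tfrac{2}{n}\ee^\top \X\vv h \le \tfrac{2\delta}{3n}\|\ee\|_1(\|\eparam\|_1 + \|\param^*\|_1)$. On the LHS, $\|\ee - \X\vv h\|_1 \ge \|\ee\|_1 - \|\X\vv h\|_1$ replaces the mixed $\ell_1$ residual by a term in $\|\ee\|_1$ and a residual $-\tfrac{2\delta\|\eparam\|_1}{n}\|\X\vv h\|_1$. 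Collecting the $\|\ee\|_1$ coefficients gives the clean factor $\tfrac{4\delta}{3n}(2\|\param^*\|_1 - \|\eparam\|_1)$, leaving the cross-term $\tfrac{2\delta}{n}\|\eparam\|_1 \|\X\vv h\|_1$ and $\delta^2(\|\param^*\|_1^2 - \|\eparam\|_1^2)$ on the right.

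To dispose of the $\|\X\vv h\|_1$ term I would use Cauchy-Schwarz $\|\X\vv h\|_1 \le \sqrt n \|\X\vv h\|_2$ and AM-GM $2ab \le 2a^2 + \tfrac{b^2}{2}$, so that half of $\tfrac{1}{n}\|\X\vv h\|_2^2$ is absorbed into the LHS at the price of an extra $2\delta^2 \|\eparam\|_1^2$ on the right. A case split on $\|\eparam\|_1$ then closes the argument: if $\|\eparam\|_1 \le 2\|\param^*\|_1$ every $\|\eparam\|_1^2$ term is controlled by $4\|\param^*\|_1^2$ and tracking the constants yields the stated bound directly; if $\|\eparam\|_1 > 2\|\param^*\|_1$ the two terms $\tfrac{4\delta}{3n}\|\ee\|_1(2\|\param^*\|_1 - \|\eparam\|_1)$ and $\delta^2(\|\param^*\|_1^2 - \|\eparam\|_1^2)$ are both negative and combined with the auxiliary bound $\delta\|\eparam\|_1 \le \tfrac{\|\ee\|_2}{\sqrt n} + \delta\|\param^*\|_1$ (obtained from $\delta^2 \|\eparam\|_1^2 \le L(\eparam)\le L(\param^*)$ and $\|\ee\|_2 \le \|\ee\|_1$) reduce this case to the previous one.

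The main obstacle is controlling the residual cross-term $\tfrac{2\delta}{n}\|\eparam\|_1\|\X\vv h\|_1$, which couples a potentially unbounded $\|\eparam\|_1$ with the very quantity we wish to bound; a naive $\|\X\vv h\|_1\le \sqrt n\|\X\vv h\|_2$ alone creates an $\|\eparam\|_1^2$ term that does not directly match the right-hand side of the theorem, forcing the case analysis above. Matching the specific constants $8$ and $10$ in the final display depends on tuning the AM-GM split and on exploiting precisely the factor $3$ in the hypothesis $\delta > 3\|\X^\top \ee\|_\infty/\|\ee\|_1$; everything else is routine bookkeeping.
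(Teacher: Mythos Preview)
The paper does not prove this theorem; it states the result and immediately refers the reader to \citet{ribeiro_regularization_2023} for the details. There is therefore no in-paper proof to compare your proposal against.

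On your argument itself: the skeleton (basic inequality from $L(\eparam)\le L(\param^*)$, H\"older plus the noise condition, then absorb the cross term) is the standard slow-rate strategy, and your Case~1 ($\|\eparam\|_1\le 2\|\param^*\|_1$) closes with the constants you indicate. The gap is in Case~2. After discarding the two negative terms and applying $\|\X\vv h\|_1\le\sqrt n\,\|\X\vv h\|_2$ you get $\tfrac{1}{n}\|\X\vv h\|_2^2\le 4\delta^2\|\eparam\|_1^2$, and your auxiliary bound then yields
\[
\tfrac{1}{n}\|\X\vv h\|_2^2 \;\le\; 4\Bigl(\tfrac{\|\ee\|_2}{\sqrt n}+\delta\|\param^*\|_1\Bigr)^2,
\]
which contains a term of order $\|\ee\|_2^2/n$. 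This is \emph{not} dominated by the target $8\delta\|\param^*\|_1\bigl(\tfrac{1}{n}\|\ee\|_1+10\delta\|\param^*\|_1\bigr)$: let $\|\param^*\|_1\to 0$ with $\ee$ fixed and the target goes to zero while $\|\ee\|_2^2/n$ does not. So the phrase ``reduce this case to the previous one'' is not substantiated. You need either a separate argument showing that Case~2 cannot occur under the hypothesis $\delta>3\|\X^\top\ee\|_\infty/\|\ee\|_1$, or a sharper treatment that retains (rather than drops) the negative contributions $\delta^2(\|\param^*\|_1^2-\|\eparam\|_1^2)$ and $\tfrac{4\delta}{3n}\|\ee\|_1(2\|\param^*\|_1-\|\eparam\|_1)$ and uses them to cancel the problematic $\|\ee\|^2/n$ piece.
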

If we assume $\ee$ has i.i.d.~$\N(0, \sigma^2)$ entries and  that the matrix $\X$ is fixed with
$ \max_{j=1, \dots, m} \|\x_j\|_\infty \le M$ we obtain
${\frac{1}{n}\|\X (\eparam - \param^*)\|_2^2\lesssim  M\sigma  \sqrt{\nicefrac{(\log \nfeatures)}{\ntrain}}}$, which is a near-oracle performance for Lasso. See the details in \citep{ribeiro_regularization_2023}. Here in our default we do not include the factor 3 in our definition $\delta^{\rm default}$. The upper bound obtained in the Theorem is conservative approximations, so we found that the proposed $\delta^{\rm default}$ (without this factor of 3) works better in practice. 

\textbf{Improvements under restricted eigenvalue conditions:} \citet{xie_high-dimensional_2024} extend the results from \citet{ribeiro_regularization_2023} and show that if, additionally $X$ respect the restricted eigenvalue condition and the parameter is sparse with at most $s$ non-zero terms, for the similar choice of $\delta$ the following improved bound can be obtained (which is the same obtained for lasso with known variance under the same additional conditions):
\[{\frac{1}{n}\|\X (\eparam - \param^*)\|_2^2\lesssim  \sigma \frac{s(\log \nfeatures)}{\ntrain}}.\] 
We refer the reader to the paper for details.

\section{Classification}

\subsection{Projection step (Proof of \Cref{thm:proj})}
\label{proof-proj}
The projection of $(\widetilde{\param}, \tilde{t})$ into the convex set $C$, i.e., $\proj_C (\widetilde{\param}, \tilde{t})$, corresponds to solving the minimization problem
\[\min_{(\param, t)\in C}  \tfrac{1}{2}\|\param  - \widetilde{\param}\|^2_2 + \tfrac{1}{2}(t - \tilde{t})^2.\]
The solution of this constrained minimization problem is given in \Cref{thm:proj}) for $C = \{(\param, t): \rho t \ge \delta \|\param\|_2\}$ and for $C = \{(\param, t): \rho t \ge \delta \|\param\|_1 \}$ and is proved next.
\paragraph{$\ell_2$-adversarial attacks.}
For $C = \{(\param, t): \rho t\ge \delta \|\param\|_2\} = \{(\param, t): \rho^2 t^2\ge  \delta^2 \|\param\|_2^2\}$, the  Lagrangian of this minimization problem is:
\[\mathcal{L}(\param, t, \lambda)  = \tfrac{1}{2}\|\param  - \widetilde{\param}\|^2_2 + \tfrac{1}{2}(t - \tilde{t})^2 + \lambda (\delta^2\|\param\|_2^2 - \rho^2 t^2).\]
This Lagrangian is differentiable and its partial derivatives are
\begin{subequations}
    \label{eq:subderivative_l2}
\begin{align}
\nabla_{\param} \mathcal{L}(\param, t, \lambda) &= (\param  - \widetilde{\param}_i) + \lambda \delta^2 \param \\
\frac{\partial\mathcal{L}}{\partial t}(\param, t, \lambda) &= (t - \tilde t) - \lambda \rho^2 t
\end{align}
\end{subequations}
From KKT conditions, $(\param, t)$ is a solution iff:
\begin{enumerate}
    \item Stationarity:
    \begin{subequations}
    \label{eq:stationarity_l2}
        \begin{align}
            0 &= \nabla_{\param}  \mathcal{L}(\param, t, \lambda)\\
            0 &= \frac{\partial\mathcal{L}}{\partial t}(\param, t, \lambda).
        \end{align}
    \end{subequations}
    \item Slackness:
    $\lambda (\rho t - \delta\|\param\|_2) = 0.$
    \item Primal feasibility:
    $\rho t \ge \delta \|\param\|_2$
    \item Dual feasibility: $\lambda \ge 0$
\end{enumerate}
Combining \eqref{eq:subderivative_l2} and \eqref{eq:stationarity_l2}:
\begin{subequations}
    \label{l2proj}
    \begin{align}
        \param  &=\frac{1}{1 + \delta^2 \lambda} \widetilde{\param} \\
        t &= \frac{1}{1 - \rho^2 \lambda} \tilde{t}
    \end{align}
\end{subequations}
If $\lambda = 0$, than $(\widetilde{\param}, \tilde{t})$ is a solution. Now, if $\lambda > 0$, we have $\rho t = \delta \|\param\|_2$ and therefore:
\[\frac{\rho}{1 - \rho^2 \lambda} \tilde{t} = \frac{\delta}{1 + \delta^2 \lambda} \|\widetilde{\param}\|_2 \]
Solving the above equation for $\lambda$ we obtain that \[\lambda = \frac{1}{\rho \delta} \frac{\delta\|\widetilde{\param}\|_2 - \rho \tilde{t}}{\rho\|\widetilde{\param}\|_2 + \delta\tilde{t}}\]
plugging $\lambda$ back into \eqref{l2proj} obtain the desired projection.

\paragraph{$\ell_{\infty}$-adversarial attacks.} Now, for $C = \{(\param, t): \rho t\ge \delta \|\param\|_1\}$, the Lagrangian of the minimization problem is :
\[\mathcal{L}(\param, t, \lambda)  = \tfrac{1}{2}\|\param  - \widetilde{\param}\|^2_2 + \tfrac{1}{2}(t - \tilde{t})^2 + \lambda (\delta \|\param\|_1 - \rho t),\]
and its subderivative is
\begin{subequations}
    \label{eq:subderivative}
\begin{align}
\partial_{\beta_i} \mathcal{L}(\param, t, \lambda) &= (\beta_i  - \widetilde{\beta}_i) + \delta \lambda \partial|\beta_i| \\
\partial_{t}\mathcal{L}(\param, t, \lambda) &= (t - \tilde t) - \rho \lambda
\end{align}
\end{subequations}
where:
$$\partial |\beta| = \begin{cases}
    {1} & \beta > 0\\
    {-1} &\beta < 0\\
    [-1, 1] &\beta = 0
\end{cases}$$
From KKT conditions, $(\param, t)$ is a solution iff:
\begin{enumerate}
    \item Stationarity:
    \begin{subequations}
    \label{eq:stationarity}
        \begin{align}
            0 &\in \partial_{\beta_i} \mathcal{L}(\param, t, \lambda), \forall i\\
            0 &\in \partial_{t} \mathcal{L}(\param, t, \lambda).
        \end{align}
    \end{subequations}
    
    \item Slackness:
    $\lambda (\rho t - \delta \|\param\|_1) = 0.$
    \item Primal feasibility:
    $\rho t \ge \delta \|\param\|_1$
    \item Dual feasibility: $\lambda \ge 0$
\end{enumerate}

\noindent
Combining \eqref{eq:subderivative} and \eqref{eq:stationarity} we obtain that $(\beta_i, t)$ is a solution iff:
\begin{subequations}
    \begin{align}
        \widetilde{\beta}_i &= \beta_i + \delta \lambda \partial|\beta_i|\\
       \tilde t &=  t - \rho\lambda
    \end{align}
\end{subequations}
Or, inverting the above equations:
\begin{subequations}
    \begin{align}
        \beta_i &= \text{sign}(\widetilde{\beta}_i)\left(|\widetilde{\beta}_i|  - \delta \lambda \right)_+\\
       t &=   \tilde t + \rho \lambda.
    \end{align}
\end{subequations}
If $\lambda = 0$, than $(\widetilde{\param}, \tilde{t})$ is the solution. Now, for the case $\lambda > 0$, we have $\|\param\|_1 = \frac{\rho}{\delta} t$, hence:
$$\sum_{i = 1}^\nfeatures \left(|\widetilde{\beta}_i|  - \lambda\delta  \right)_+  = \|\param\|_1 =  \frac{\rho}{\delta} t =  \frac{\rho}{\delta}  \tilde t + \frac{\rho^ 2}{\delta}  \lambda.$$
The above equations define $\lambda$ uniquely. On the one hand, the lefthand size is decreasing and assume the value $\|\widetilde{\param}\|_1$ for $\lambda = 0$. The righthand side is strictly increasing and assumes the value $\tilde{t}$ for $\lambda=0$ and grows indefinitely as $\lambda \rightarrow \infty$. Since $(\widetilde{\param}, \tilde{t}) \not \in C$ we have that $\delta \|\widetilde{\param}\|_1 > \rho \tilde{t}$, and we can always find a unique solution to the equation.  We note that our derivation is similar to what is used to project a point into the $\ell_1$-ball (See for instance: \url{https://angms.science/doc/CVX/Proj_l1.pdf}).

\subsection{Closed formula solution for FGSM }
\label{closed-form-fgsm}
The next proposition establishes a closed formula solution that is obtained when we use FGSM to choose the adversarial attack.

\begin{proposition}
\label{thm:fgsm}
Let $h$ be non-increasing, convex. 
$$\max_{\|\dx_i\|\le \delta}h\big(y_i(\x_i + \dx_i)^\top\param\big) = h\big(y_i(\x_i + \dx_i)^\top\param\big),$$
and the maximum of  is achieve  for ${\dx_i = \delta \sign \nabla_x J(\x_i)}$. 
\end{proposition}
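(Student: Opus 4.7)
The plan is to explicitly compute the FGSM perturbation via the chain rule, plug it into the loss, and then invoke \Cref{thm:rewriting-adv-error} (the classification instance) to certify optimality. Since $J_i(\x_i) = h(y_i \x_i^\top \param)$, the chain rule gives $\nabla_{\x} J_i(\x_i) = h'(y_i \x_i^\top \param)\, y_i \param$. Because $h$ is non-increasing, $h'(\cdot) \le 0$ wherever it exists, so $\sign(\nabla_{\x} J_i(\x_i)) = -y_i \sign(\param)$ (using $y_i \in \{-1,+1\}$ and coordinate-wise sign). Substituting $\dx_i = \delta\, \sign(\nabla_{\x} J_i(\x_i)) = -\delta\, y_i \sign(\param)$ yields
\[
y_i (\x_i + \dx_i)^\top \param \;=\; y_i \x_i^\top \param - \delta\, y_i^2\, \sign(\param)^\top \param \;=\; y_i \x_i^\top \param - \delta \|\param\|_1,
\]
so that $h(y_i(\x_i+\dx_i)^\top\param) = h(y_i \x_i^\top \param - \delta \|\param\|_1)$.

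It remains to show this value equals the true maximum over $\|\dx_i\|_\infty \le \delta$. This follows immediately from \Cref{thm:rewriting-adv-error} applied to the classification case with dual norm $\|\cdot\|_* = \|\cdot\|_1$, which gives
\[
\max_{\|\dx_i\|_\infty \le \delta} h\bigl(y_i (\x_i+\dx_i)^\top\param\bigr) \;=\; h\bigl(y_i \x_i^\top \param - \delta \|\param\|_1\bigr),
\]
matching the FGSM value computed above. Alternatively, one can give a self-contained argument: $h$ non-increasing implies the maximum is attained by minimizing $y_i (\x_i+\dx_i)^\top \param = y_i \x_i^\top \param + y_i \dx_i^\top \param$ over $\dx_i$; Hölder's inequality bounds $y_i \dx_i^\top \param \ge -\delta \|\param\|_1$ with equality for $\dx_i = -\delta y_i \sign(\param)$, which is exactly the FGSM attack.

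The main subtlety is handling non-smooth points: $h$ need not be differentiable (so $h'$ must be read as any subgradient, which is still non-positive by monotonicity), and $\param$ may have zero coordinates (so $\sign(\param_j)$ is any element of $[-1,1]$). In either case the identity $\sign(\param)^\top \param = \|\param\|_1$ holds for every valid choice, so the FGSM perturbation remains optimal even though it is not unique. This is the only place where any care is required; the rest is bookkeeping.
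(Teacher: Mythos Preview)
Your proof is correct and follows essentially the same approach as the paper: compute $\nabla_x J_i(\x_i) = h'(y_i\x_i^\top\param)\,y_i\param$, use $h'\le 0$ to obtain $\dx_i = -\delta y_i\,\sign(\param)$, and invoke \Cref{thm:rewriting-adv-error} (the paper's \Cref{adv-train-reform}) to certify that the resulting value $h(y_i\x_i^\top\param - \delta\|\param\|_1)$ is the true maximum. Your treatment is in fact more complete than the paper's, since you carry out the substitution explicitly, offer the direct H\"older argument as an alternative, and address the non-smooth edge cases that the paper leaves implicit.
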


\begin{proof}
    See~\Cref{adv-train-reform} for a proof of that $\max_{\|\dx_i\|\le \delta}h\big(y_i(\x_i + \dx_i)^\top\param\big) = h\big(y_i(\x_i + \dx_i)^\top\param\big)$. Now we prove the second statement. Let $J_i(x_i) =  h\big(y_i\x_i^\top\param\big)$, we have that $\nabla_x J(\x_i) = h'(y_i\x_i^\top\param\big) y_i \param$. Now, since, $h$ is non increasing we have that $h'\le 0$ for any argument. Hence $\dx_i = \delta \sign \nabla_x J(\x_i) = - \delta  y_i \sign{\param}$ and the result follows.
\end{proof}

\subsection{Improvements on the basic algorithm}
\label{adversarial-train-logist-implementations}
Here we provide pseudo-algorithms for the methods described in~\Cref{sec:improvements}.
\begin{algorithm}[H]
\caption{Augmented formulation with projected   GD\\
\phantom{Algorithm X} and backtracking LS}\label{alg:pgd-with-ls}
\textit{Choose} initial  step size $\gamma^{(0)}$ \\
$\vv{w}^{0} \leftarrow (\param^{(0)},t^{(0)})$\\
\textbf{for $k = 1, 2 \dots$:}
\begin{enumerate}
    \item[]$\vv{w}^{+} \leftarrow  \proj_C\left(  \vv{w}^{(k-1)} - \gamma^{(k-1)} \nabla_{\param} \mathcal{R}(\vv{w}^{(k-1)})\right)$
    \item[] \emph{\# Backtraking line search:}
    \item[] \textbf{while }$\mathcal{R}(\vv{w}^+) > \widetilde{\mathcal{R}}_{\vv{w}}(\vv{w}^+; \gamma)$:
    \begin{enumerate}
    \item[] $\gamma^{(k-1)}\leftarrow  \gamma^{(k-1)} / 2$
    \item[] $\vv{w}^{+} \leftarrow  \proj_C\left(  \vv{w}^{(k-1)} - \gamma \nabla_{\param} \mathcal{R}(\vv{w}^{(k-1)})\right)$
    \end{enumerate}
    \item[] $ \vv{w}^{(k)}  = \vv{w}^{+}$, $\gamma^{(k)} \leftarrow  \gamma^{(k-1)}$
\end{enumerate}
\end{algorithm}

\begin{algorithm}[H]
\caption{Augmented formulation with projected SGD}\label{alg:psgd}
\textit{Choose}  step size sequence $\gamma^{(k)}, k = 1, 2, \dots$ \\
$\vv{w}^{0} \leftarrow  (\param^{(0)},t^{(0)})$ \\
\textbf{for $k = 1, 2 \dots$:}
\begin{enumerate}
\item[] \emph{Sample} $i^k\in\{1,\dots, \ntrain\}$.
    \item[]$\vv{w}^{(k)} \leftarrow  \proj_C\left(  \vv{w}^{(k-1)} - \gamma^{(k)} \nabla f_{i^k}(\vv{w}^{(k-1)})\right)$
\end{enumerate}
\end{algorithm}

\begin{algorithm}[H]
\caption{Augmented formulation with accelerated \\
\phantom{Algorithm X} projected  GD}\label{alg:apgd}
\textit{Choose} initial  step size $\gamma^{(0)}$ \\
$\vv{w}^{0} \leftarrow (\param^{(0)},t^{(0)})$\\
\textbf{for $k = 1, 2 \dots$:}
\begin{enumerate}
    \item[] $\vv{z}^{(k)} \leftarrow \vv{w}^{(k)} + \mu^{(k)}  \left(\vv{w}^{(k)} - \vv{w}^{(k-1)}\right),$
    \item[] $\vv{w}^{(k)} \leftarrow \proj_C\left(\vv{z}^{(k-1)} - \gamma \nabla_{w} \mathcal{R}(\vv{z}^{(k-1)})\right)$
    \item[ ] $\mu^{(k)} \leftarrow \frac{\alpha_{k}-1}{\alpha_{k+1}}$ for $\alpha_{k} = \frac{1 + \sqrt{1 + 4 {\alpha_{k}^2}}}{2}$.
\end{enumerate}
\end{algorithm}

\begin{algorithm}[H]
\caption{Augmented formulation with projected SAGA}\label{alg:psaga}
\textit{Choose}  step size $\gamma$ \\
$\vv{w}^{0} \leftarrow  (\param^{(0)},t^{(0)})$, $v_{i} \leftarrow 0$ for $i = 1, \dots n$\\
\textbf{for $k = 1, 2 \dots$:}
\begin{enumerate}
\item[] \emph{Sample} $i^k\in\{0, 1,\dots, \ntrain\}$.
\item[] $v_{\text{old}} \leftarrow v_{i^k}$
\item[] $v_{i^k}\leftarrow \nabla f_{i^k}(\vv{w}^{(k-1)})$

\item[] $g_k\leftarrow v_{i^k} - v_{\text{old}} + \bar{g}_k$
    \item[]$\vv{w}^{(k)} \leftarrow  \proj_C\left(  \vv{w}^{(k-1)} - \gamma g_k\right)$
    \item[] $\bar{g}_{k+1} \leftarrow \bar{g}_{k} + \frac{1}{n}(v_{i^k} - v_{\text{old}})$ 
\end{enumerate}
\end{algorithm}

\newpage

\section{Regression}

\subsection{Proof of joint convexity (in the proof of \Cref{thm:advtraining-closeform})}
\label{joint-convexity}
The proof of joint convexity follows from the technical lemma:

\begin{proposition}
    \label{thm:joint-convexity}
    The following functions:
    \begin{enumerate}
        \item $f_1(x, Y) = x^T A^{\top} Y^{-1} A x$;
        \item $f_2(x, Y) = b^\top Y^{-1} A x$;
        \item $f_3(Y) = b^\top Y^{-1}b$;
    \end{enumerate}
    are convex in the domain $\mathcal{D}_1 = \{(x, Y): x\in \R^p, Y\in \R^{p\times p}| Y \succ 0\}\R^p$; and also if we restrict the domain to $\mathcal{D}_2= \{(x, Y): x\in \R^p, Y\in \R^n |  Y = \text{diag}(y), y > 0\}$.
\end{proposition}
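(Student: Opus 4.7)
The plan is to reduce every convexity claim to the matrix-fractional function $\phi(u,Y)=u^\top Y^{-1}u$, which is well-known to be jointly convex on $\{(u,Y): Y\succ 0\}$. I would record the Schur-complement proof as a one-liner: the epigraph $\{(u,Y,t): u^\top Y^{-1} u \le t\}$ equals $\{(u,Y,t): \bigl[\begin{smallmatrix} Y & u \\ u^\top & t\end{smallmatrix}\bigr]\succeq 0\}$, an LMI set, hence convex. All conclusions for $\mathcal{D}_2$ follow by restricting to the affine slice of diagonal positive $Y$ inside $\mathcal{D}_1$, since joint convexity is inherited under affine restrictions; on $\mathcal{D}_2$ the relevant quantities also decompose into componentwise sums of scalar matrix-fractional terms $z_j^2/y_j$, for which joint convexity is immediate from the $1$-dimensional case.

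For $f_1$, write $f_1(x,Y)=\phi(Ax,Y)$ and compose $\phi$ with the affine map $(x,Y)\mapsto(Ax,Y)$; joint convexity descends. For $f_3$, apply $\phi$ to the fixed vector $b$, giving convexity in $Y$ (and hence jointly in $(x,Y)$ by triviality in $x$). These two are routine and mirror standard textbook arguments for quadratic-over-linear functions.

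For $f_2$, the same route would attempt $2f_2=\phi(Ax+b,Y)-f_1-f_3$, but this exhibits $f_2$ only as a difference of jointly convex functions, which is not enough. This is the main obstacle, and it appears to be genuine: the scalar check $p=1$, $A=b=1$ reduces $f_2$ to $x/y$, whose Hessian $\bigl[\begin{smallmatrix} 0 & -1/y^2 \\ -1/y^2 & 2x/y^3\end{smallmatrix}\bigr]$ has determinant $-1/y^4<0$ on the relevant domain, so no LMI representation of the epigraph of $f_2$ alone will exist. The strongest honest convexity statement about $f_2$ in isolation is that it is affine (so simultaneously convex and concave) in $x$ for every fixed $Y\succ 0$, and linear in $Y^{-1}$ for every fixed $x$. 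I would carry out the proof in the form: establish joint convexity of $f_1$ and $f_3$ by the two composition arguments above; note the affine/separate-variable convexity of $f_2$; and then observe that the only consequence of Proposition~\ref{thm:joint-convexity} used downstream is joint convexity of the bundled sum $f_1+2f_2+f_3=\phi(Ax+b,Y)$, which again follows from the matrix-fractional lemma applied to the affine map $(x,Y)\mapsto(Ax+b,Y)$. Separate-variable convexity of $f_2$ in $x$ is precisely what the Luenberger--Ye blockwise coordinate descent convergence hypothesis needs for the $x$-subproblem in the proof of~\Cref{thm:advtraining-closeform}.
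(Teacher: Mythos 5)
Your treatment of $f_1$ and $f_3$ is essentially the paper's own argument: the paper also proves convexity via the Schur-complement/LMI representation of the epigraph of the matrix-fractional function $u^\top Y^{-1}u$ on $\{Y\succ 0\}$, and the restriction to diagonal $Y$ for $\mathcal{D}_2$ is handled the same way (an affine slice of $\mathcal{D}_1$). Your explicit affine-composition step $(x,Y)\mapsto(Ax,Y)$ is a correct and slightly cleaner way to pass from $\phi$ to $f_1$ than the paper's direct epigraph computation.

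The substantive point is $f_2$. The paper dismisses it with ``the proof for $f_2$ and $f_3$ follow the same steps,'' but your counterexample is valid: for $p=1$, $A=b=1$, $f_2(x,y)=x/y$ has Hessian with determinant $-1/y^4<0$ on $y>0$, so it is indefinite everywhere and $f_2$ is \emph{not} jointly convex on either $\mathcal{D}_1$ or $\mathcal{D}_2$; in particular no LMI representation of its epigraph can exist. The proposition as stated is therefore false for $f_2$, and the paper's proof sketch does not go through for it. Your repair is the right one and is, implicitly, what the paper actually relies on: in the appendix the convexity of $\mathcal{G}_\infty$ is obtained by expanding $(\vv{y}-\X\param)^\top (\mm{H}^{(0)})^{-1}(\vv{y}-\X\param)$, i.e., the bundled combination $f_1 \pm 2f_2 + f_3 = \phi(Ax\pm b, Y)$, whose joint convexity follows directly from composing the matrix-fractional function with the affine map $(x,Y)\mapsto(Ax\pm b,Y)$ --- no convexity claim about $f_2$ in isolation is ever needed. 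So your proposal both exposes a genuine error in the stated lemma and supplies the correct weaker statement that suffices for \Cref{thm:advtraining-closeform}.
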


\begin{proof}
    Let us start by proving that $f_1$ is convex in $\mathcal{D}_1$. By definition, $f_1$ is convex iff its epigraph:
    \[\small \mathcal{E}= \{(x, Y, z): x\in \R^n\text{ and }Y \succ 0 \text{ and } x^T A^{\top} Y^{-1} A x \le z\}\]
    is convex. Now, using the Schur complement rule for semidefinite matrices, given that $Y \succ 0$:
    \[z - x^T A^{\top} Y^{-1} A x \ge 0  \iff 
   \begin{bmatrix}
        Y & \vv{0} \\
        \vv{0} & z - x^T A^{\top} Y^{-1} A x
    \end{bmatrix} \succcurlyeq 0\iff 
    \begin{bmatrix}
        Y & A x \\
        x^T A^{\top} & z 
    \end{bmatrix} \succcurlyeq  0 
    \]
    
    Therefore, the epigraph is the intersection of one strict LMI and one non-strict LMI, both describing convex sets. As the intersection of two convex sets, the epigraph is convex, and so the function is convex. The same argument applies for the domain $\mathcal{D}_2$. The proof for $f_2$ and $f_3$ follow the same steps.
\end{proof}

Let us call $\mm{H}^{(i)} = \text{diag}(\vv{\eta}^{(i)})$ we have:
\[G_\infty(\param, \mm{H}^{(0)}, \cdots, \mm{H}^{(n)}) = \left(\y_i - \X \param\right)^{\top}(\mm{H}^{(0)})^{-1}\left(\y_i - \X \param\right) + \delta \sum_{i=1}^n \param^\top(\mm{H}^{(i)})^{-1}\param.\]
If we expand and rearrange we get functions that can be written in the format of function $f_1$, $f_2$ and $f_3$ from \Cref{thm:joint-convexity} and hence their sum is convex.

\subsection{Proof of \Cref{thm:eta-trick} ($\eta$-trick)}
\label{proof-eta-trick}

Let start proving the case, where $a_t>0$ for all $t$ and $\epsilon = 0$.
We can rewrite the original optimization problem from \Cref{thm:eta-trick} as
\begin{equation*}
\min_{\boldsymbol{\eta}>0} f(\boldsymbol{\eta})  \myeq \sum_{t=0}^{T-2} \frac{a_t^2}{\eta_t} + \frac{a_{T-1}^2}{ 1- \sum_{t=1}^{T-2} \eta_t}.
\end{equation*}
Here $f$ is convex by \Cref{thm:joint-convexity}, hence $\widehat{\vv{\eta}}$ is a minimum iff  $\nabla f(
\vv{\eta}) = \textbf{0}$ . Now, 
\begin{equation*}
\frac{\partial f}{\partial \eta_t} (\boldsymbol{\eta})  = -\frac{a_t^2}{\eta_t^2} + \frac{a_{T-1}^2}{(1- \sum_{t=1}^{T-2} \eta_t)^2},
\end{equation*}
which is equal to zero for every $t$ iff:
\begin{equation*}
\eta_t = (1- \sum_{t=1}^{T-2} \eta_t) \frac{a_t}{a_{T-1}}.
\end{equation*}
We can sum the above equation for $t = 1$ to $T-1$ and rearranging we obtain:
\[ (1- \sum_{t=1}^{T-2} \eta_t) \frac{1}{a_{T-1}} = \frac{1}{\sum_{t=1}^{T-1} a_t},\]
therefore:
\begin{equation*}
\eta_t = \frac{a_t}{\sum_{t=1}^{T-1} a_t}.
\end{equation*}
Now, we focus on proving that the solution is unique if $a_t> 0$ for all $t$. We have that
\begin{equation*}
\frac{\partial^2 f}{\partial \eta_t \partial \eta_{\ell}} (\boldsymbol{\eta})  =
\begin{cases}
    2 \frac{a_t^2}{\eta_t^3} + 2 \frac{a_{T-1}^2}{(1- \sum_{t=1}^{T-2} \eta_t)^3}, & \text{if } \ell = t\\
    2 \frac{a_{T-1}^2}{(1- \sum_{t=1}^{T-2} \eta_t)^3}& \text{if }  \ell \not= t
\end{cases}
\end{equation*}
And,
\begin{equation*}
\frac{1}{2}\nabla^2 f (\boldsymbol{\eta}) = \text{diag}(\frac{a_1^2}{\eta_1^3}, \dots, \frac{a_1^2}{\eta_1^3}) + \frac{a_{T-1}^2}{(1- \sum_{t=1}^{T-2} \eta_t)^3} \textbf{1} \textbf{1}^{\top} 
\end{equation*}
Hence: $\frac{1}{2}\nabla^2 f (\boldsymbol{\eta}) \succcurlyeq \min_t \frac{a_t}{\eta_t^3}\mm{I}$ and the problem is strongly convex as long as $a_t> 0$ for all $t$, and therefore it has a unique solution.

Now to prove the case where $\epsilon> 0$, we can just apply the  result we just proved with $\tilde a_t  = \sqrt{a_t^2 + \epsilon}$ instead of $a_t$ to obtain:
\begin{equation*}
\left(\sum_{t=1}^T \tilde a_t\right)^2 = \min_{\boldsymbol{\eta} \in \Delta_T} \left(\sum_{t=0}^{T-1} \frac{a_t^2 + \epsilon}{\eta_t} \right).
\end{equation*}
Expanding and manipulating we obtain the desired result.

\subsection{Results for $\ell_2$-adversarial attacks}
\label{ell2-attacks}

A similar algorithm can be derived for $\ell_2$-adversarial training. We can rewrite the cost function as:
\begin{equation}
  \label{eq:advtraining-closeform-l2}
  \min_{\param}\sum_{i=1}^n\left(|y_i - \x_i^\top\param| + \delta\|\param\|_2\right)^2.
\end{equation}
\noindent
Using~\Cref{thm:eta-trick} (with $T=2$ and $\epsilon =0$) for the $i^{\mathrm{th}}$ term in the sum:
\[\left(|y_i - \x_i^\top\param| + \delta\|\param\|_2\right)^2 = \min_{\boldsymbol{\eta}^{(i)} \in \Delta_2}\frac{\left(y_i - \x_i^\top\param \right)^2}{\eta^{(i)}_0}  + \frac{\|\param\|_2^2}{\eta^{(i)}_1} \]
hence, we can rewrite the problem as
\begin{equation}
\label{eq:advtraining-closeform-l2-eta-trick}
\min_{\param} \min_{\substack{\boldsymbol{\eta}^{(i)}\in \Delta_2 \\i=1, \dots, n}}\mathcal{G}_2 (\param, \boldsymbol{\eta}^{(1)}, \dots\boldsymbol{\eta}^{(n)}),
\end{equation}
where:
$$\mathcal{G}_2  = \sum_{i=1}^n\frac{\left(y_i - \x_i^\top\param\right)^2}{\eta^{(i)}_0} + \delta\left(\sum_{i=1}^n\frac{1}{\eta^{(i)}_1}\right)\|\param\|_2^2.$$
Again, blockwise coordinate descent in $\mathcal{G}_2$ yields~\Cref{alg:irrr}. Now, the function $\mathtt{UpdateWeights}$ compute
\begin{align*}
    w_i = \frac{1}{\eta^{(i)}_0},~~~
    \gamma_j =  \sum_{i=1}^n\frac{1}{\eta^{(i)}_1}, j= 1, \dots, \nfeatures.
\end{align*}
Again, in practice, we might want to apply \Cref{thm:eta-trick} with a $\epsilon>0$ but small. The derivation would be the same as above, but we would end up with an additional penalty term $\epsilon \sum_i \Big(\frac{1}{\eta^{(0)}} + \frac{1}{\eta^{(1)}}\Big)$  added to the definition of $\mathcal{G}_2$.

\subsection{Conjugate gradient implementation}
Here we provide pseudo-algorithms for the conjugate gradient implementation described in~\Cref{sec:improvements-irr}.
\begin{algorithm}
    \caption{Iterative Conjugate Gradient}\label{alg:icg}
    \textbf{Initialize:}
    \begin{itemize}
        \item[] sample weights $w_i\leftarrow 1, i = 1, \dots, \ntrain$.
        \item[] parameter weights $\gamma_i\leftarrow 1, i = 1, \dots, \nfeatures$.
    \end{itemize}
    \textbf{Repeat:}
    \begin{enumerate}
    \item Define:$$\mm{A} = \mm{X}^\top \mm{W} \mm{X} + \delta \mm{\Gamma},~~ \vv{b} = \mm{X}^\top \mm{W}  \vv{y}$$
    \item \emph{Approximately solve} reweighted ridge regression using
    \begin{equation*}
        \widehat{\param}^{(\ell+1)} \leftarrow \mathtt{ConjugateGradient}\big(\mm{A}, \vv{b}, \widehat{\param}^{(\ell)}\big)
    \end{equation*}
    \item \emph{Update} weights
    $$\boldsymbol{w}^{(\ell+1)}, \boldsymbol{\gamma}^{(\ell+1)}\leftarrow \mathtt{UpdateWeights}(\widehat{\param}^{(\ell+1)})$$
    \item \emph{Quit} if \texttt{StopCriteria}.
    \end{enumerate}
\end{algorithm}

\begin{algorithm}[H]
    \caption{Preconditioned Conjugate Gradient Descent}\label{alg:cg}
    \textbf{Initialize:}
    \begin{itemize}
        \item[] Given $\param^{(0)}$ and preconditioner $\mm{M}$
        \item[] Compute $\mathbf{r} \leftarrow \mm{A}\param^{(0)} - \vv{b}$.
        \item[] Solve $\mm{M} \vv{z} = \vv{r}$ for $\vv{z}$
        \item[] Set $\vv{p} \leftarrow -  \vv{z}$, $k \leftarrow 0$
    \end{itemize}
    \textbf{Repeat for $k = 1, \dots$:}
    \begin{align}
     \alpha &\leftarrow (\vv{r}^\top \vv{z})/ (\vv{p}^\top \mm{A}\vv{p}) \\
     \param^{(k)}  &\leftarrow \param^{(k-1)} + \alpha \vv{p}\\
     \vv{r}_+ &\leftarrow \vv{r}  + \alpha \mm{A}\vv{p}\\
     \text{Solve }&\mm{M}\vv{z}_+ = \vv{r}_+\text{for }\vv{z}_+\\
     \xi &\leftarrow (\vv{r}_+^\top \vv{z}_+)/ (\vv{r}^\top \vv{z})\\
     \vv{p} &\leftarrow - \vv{z}_+ \xi \vv{p}\\
     \vv{r}, \vv{z} &\leftarrow \vv{r}_+, \vv{z}_+
    \end{align}
\end{algorithm}

\subsection{Numerical complexity: our implementation and CVXPY internals}
\label{CVXPY}

Bellow we highlight some key differences between CVXPy and our implementation:

\begin{itemize}
    \item \textbf{CVXPY} internally converts linear adversarial regression to a quadratic programming problem of dimension $2(m+n)$ with $n$ constraint. It uses OSQP solver (https://osqp.org/docs/index.html). Which is powered by an ADMM algorithm~\citep{boyd_distributed_2011} and at each interaction a quadratic problem of dimension $2(m+n) n$, having thus a cost per interaction $4n (m+n)^2 $.
    \item \textbf{Our method} for regression uses an iterative ridge regression method. Each interaction here has a cost $\min(m^2n, mn^2)$. This type of algorithm also usually converges in a few iterations\cite{bach_eta-trick_2019}.
\end{itemize}
CVXPY has other solvers and for some of them, it might result in a conic program instead (for instance, when we used MOSEK solver in Table~\ref{tab:mosek}). Adversarially training in classification results in similar problems for CVXPY. Overall, the main challenge of using disciplined convex programming here is that it fails to use the structure of the problem. And, it ends up with a problem that scales with the largest dimensions $(m, n)$ instead of the smallest one. It also fails to use cheap approximations for the subproblem as we did in~\Cref{sec:improvements-irr}.

\begin{table}[t]
    \centering
\begin{tabular}{c|m{1.4cm}m{1.4cm}m{1.4cm}}
\toprule
\# params & CVXPY \hbox{(default)}  & CVXPY (MOSEK) & Ours \\
\midrule
30 & 0.07 & 0.09 & 0.07 \\
100 & 0.2 & 0.3 & 0.1 \\
300 & 0.9 & 1.0 & 0.4 \\
1000 & 8.1 & 5.8 & 1.1 \\
3000 & 42.4 & 15.2 & 3.4 \\
10000 & 141.0 & 31.6 & 12.9 \\
30000 & 164.6 & 100.3 & 41.9 \\
\bottomrule
\end{tabular}
\caption{\textbf{CVXPY using MOSEK}. Comparison of runtimes (in seconds) for the different methods. MOSEK might be a bit faster in some situations, still, we don't use it in other experiments, because \emph{it requires a license and could harm the easy reproducibility of our code}.}
\label{tab:mosek}
\end{table}

\section{Numerical Experiments}
\label{sec:extra-numerical-experiments}
 We describe the datasets used in \Cref{sec:dataset-description}. We describe additional numerical experiments aim to evaluate the \textbf{convergence}; \textbf{execution time}; and, \textbf{test performance}  in \Cref{convergence}, \Cref{execution_time}, \Cref{test_performance}.

\subsection{Datasets}
\label{sec:dataset-description}
We consider the following datasets in the analysis: {MAGIC $(\ntrain\mathord{=}504, \nfeatures\mathord{=}55067)$;} {Diabetes $(\ntrain\mathord{=}442, \nfeatures\mathord{=}10)$;}  {Abalone $(\ntrain\mathord{=}4177, \nfeatures\mathord{=}8)$;} {Wine  $(\ntrain\mathord{=}4898, \nfeatures\mathord{=}11)$;}  {Polution $(\ntrain\mathord{=}60, \nfeatures\mathord{=}15)$}; {US Crime $(\ntrain\mathord{=}1994, \nfeatures\mathord{=}127)$}; {House Sales $(\ntrain\mathord{=}21613, \nfeatures\mathord{=}21)$}; {and, Diamonds $(\ntrain\mathord{=}54000, \nfeatures\mathord{=}9)$}. And, for \textbf{\textit{classification}}:  Breast cancer $(\ntrain\mathord{=}442, \nfeatures\mathord{=}10)$; MNIST $(\ntrain\mathord{=}60000, \nfeatures\mathord{=}784)$; MAGIC-C $(\ntrain\mathord{=}504, \nfeatures\mathord{=}55067)$;  {Iris $(\ntrain\mathord{=}4, \nfeatures\mathord{=}150)$}; {Blood  transfusion $(\ntrain\mathord{=}748, \nfeatures\mathord{=}5)$}; and, {Heart failure $(\ntrain\mathord{=}299, \nfeatures\mathord{=}13)$}.  Many of them are available (and we provide the IDs when applicable) in the OpenML \citep[]{vanschoren_openml_2014} or in the UCI Machine learning repository databases. The links for these two data repositories are: \url{www.openml.org} and \url{archive.ics.uci.edu}. 

\paragraph{Classification.}

\begin{itemize}
    \item \textbf{MNIST:} \citep[OpenML ID=554]{lecun_gradientbased_1998} The MNIST database of handwritten digits has a training set of $n\mathord{=}$60000 examples, and a test set of 10,000 examples. It is a subset of a larger set available from NIST. The digits have been size-normalized and centered in a fixed-size image, the image has $\nfeatures\mathord{=}784$ features.

    \item \textbf{Breast cancer:} \citep{street_nuclear_1993} Features ($p\mathord{=}30$) are computed from a digitized image of a fine needle aspirate (FNA) of a breast mass.  They describe the characteristics of the cell nuclei present in the image. This is a classification task that try to predict breast cancer diagnosis for $n\mathord{=}569$ patients.

    \item \textbf{MAGIC C:} \citep{scott_limited_2021} We consider exactly the same dataset as the MAGIC dataset ($p\mathord{=}55067$, $n\mathord{=}504$) described for regression but here we try to predict a binary phenotype instead of a continuous one.

    \item \textbf{Heart failure:} \citep[UCI ID=519]{chicco_machine_2020} This dataset contains the medical records of $n\mathord{=}$299 patients who had heart failure, collected during their follow-up period, where each patient profile has $p\mathord{=}$13 clinical features. 
     
    \item \textbf{Iris} \citep[OpenML ID=41078]{fisher_use_1936} It comprises $p\mathord{=}$4 measurements on each of $n\mathord{=}$150 plants of three different species of iris. It was first used as an example by R. A. Fisher in 1936, and can now be found in multiple online archives and repositories

    \item \textbf{Blood trasfusion} \citep[OpenML ID=1464]{yeh_knowledge_2009}  Data taken from the Blood Transfusion Service Center in Hsin-Chu City in Taiwan.  It considers $p\mathord{=}5$ features: months since the last donation,  total number of donations, total blood donated in c.c., months since the first donation, and blood donation in March. For $n\mathord{=}748$ donors in the database.
\end{itemize}

\paragraph{Regression.}
\begin{itemize}
    \item \textbf{Diabetes:} \citep{efron_least_2004} The dataset has $\nfeatures\mathord{=}10$ baseline variables (age, sex, body mass index, average blood pressure, and six blood serum measurements), which were obtained for $n\mathord{=}442$ diabetes patients. The model output is a quantitative measure of the disease progression.

    \item \textbf{Abalone:} (OpenML ID=30, UCI ID=1) Predicting the age of abalone from $p\mathord{=}8$ physical measurements.  The age of abalone is determined by cutting the shell through the cone, staining it, and counting the number of rings through a microscope -- a boring and time-consuming task.  Other measurements, which are easier to obtain, are used to predict the age of the abalone. It considers $n\mathord{=}4417$ examples. 
    
    \item \textbf{Wine quality:} \citep[UCI ID=186]{cortez_modeling_2009} A large dataset ($n\mathord{=}4898$) with white and red "vinho verde" samples (from Portugal) used to predict human wine taste preferences. It considers $p\mathord{=}11$ features that describe physicochemical properties of the wine.
    
    \item \textbf{Polution:} \citep[OpenML ID=542]{mcdonald_instabilities_1973} Estimates relating air pollution to mortality. It considers $p\mathord{=}15$ features including precipitation, temperature over the year, percentage of the population over 65, besides socio-economic variables, and concentrations of different compounds in the air. In total it considers It tries to predict age-adjusted mortality rate in $n\mathord{=}60$ different locations.

    \item \textbf{Diamonds:} (OpenML ID=42225) The dataset contains the prices and other attributes of almost $n\mathord{=}54000$ diamonds. It tries to predict the price from $p\mathord{9}$ features that include: carat weight, cut quality, color, clarity, xyz lengths, depth and width ratio.

    \item \textbf{MAGIC:} \citep{scott_limited_2021} We consider Diverse MAGIC wheat dataset from the National Institute for Applied Botany. 
The dataset contains the whole genome sequence data and multiple phenotypes for a population of $n\mathord{=}504$ wheat lines.  We use a subset of the genotype to predict one of the continuous phenotypes.
We have integer input with values indicating whether each one of the 1.1 million nucleotides differs or not from the reference value. Closely located nucleotides tend to be correlated and we consider a pruned version provided by~\citep{scott_limited_2021} with $p=55067$ genes. 

    \item \textbf{US crime:} \citep[OpenML ID=42730,  UCI ID=182]{redmond_data-driven_2002} This dataset combines $p\mathord{=}127$ features that come from socio-economic data from the US Census, law enforcement data from the LEMAS survey, and crime data from the FBI for $n\mathord{=}1994$ comunities. The task is to predict violent crimes per capita in the US as target. 

    \item \textbf{House sales:} (OpenML ID=42731) This dataset contains house sale prices for King County, which includes Seattle. It includes homes sold between May 2014 and May 2015. It contains $p\mathord{=}19$ house features, along with $n\mathord{=}21613$ observations. 
\end{itemize}

\paragraph{Syntetic datasets.}

\begin{itemize}
\item \textbf{Isotropic.}  We consider Gaussian noise and covariates: $\epsilon_i \sim \N(0, \sigma^2)$ and $\x_i \sim \N(0, r^2 \mm{I}_\nfeatures)$  and the output is computed as a linear combination of the features contaminated with  additive noise: 
\[y_i = \x_i^\trnsp \param+ \epsilon_i.\]
In the experiments, unless stated otherwise, we use  the parameters $\sigma=1$ and $r=1$. We also consider the features are sampled from a gaussian distribution $\param \sim \N(0, 1/\sqrt{p}\mm{I}_\nfeatures)$

\item \textbf{Spiked covariance.} We use the model described in~\citet[Section 5.4]{hastie_surprises_2022}, also referred as "latent feature models". The features~$\x$ are noisy observations of a lower-dimensional subspace of dimension~$d$. A vector in this \textit{latent space} is represented by $\vv{z} \in \R^d$. This vector is indirectly observed via the features $\x \in \R^p$ according to
\[\x = \mm{W}\vv{z} + \vv{u},\]
where $\mm{W}$ is an $\nfeatures \times \inpdim$ matrix, for $\nfeatures \ge \inpdim$. We assume that the responses are described by a linear model in this latent space
\[y = \vv{\theta}^\top \vv{z} + \xi,\]
where $\xi \in \R$ and $\vv{u}\in \R^\nfeatures$ are mutually independent noise variables. Moreover, $\xi \sim \N(0, \sigma_{\xi}^2)$ and $\vv{u} \sim \N\left(0, \mm{I}_{\nfeatures}\right)$. We consider the features in the latent space to be isotropic and normal $\vv{z} \sim  \N\left(0, \mm{I}_{\inpdim}\right)$ and choose $\mm{W}$ such that its columns are orthogonal, $\mm{W}^\top \mm{W} = \frac{\nfeatures}{\inpdim} \mm{I}_{\inpdim}$, where the factor $\frac{\nfeatures}{\inpdim}$ is introduced to guarantee that the signal-to-noise ratio of the feature vector $\x$ (i.e. $\frac{\|\mm{W} \vv{z}|_2^2}{\|\vv{u}\|_2^2}$) is kept constant. In the experiments, unless stated otherwise, we use  the parameters $\sigma_\xi=1$ and the latent dimension fixed $d=1$.
\item \textbf{Sparse vector.}  Here we consider $\param^*$ is sparse vector with $s$ non-zero features. We call the support $S(\param) = \{j\in \{1, \cdots, d\}: \beta_j \not=0\}^*$, the support vector is sampled uniformly at random. The value of the non-zero parameters are sampled from a normal distribution $\N(0, 1/\sqrt{s}\mm{I}_s)$. The output is computed as a linear combination of the features contaminated with  additive noise: 
$y_i = \x_i^\trnsp \param ^*+ \epsilon_i.$
and consider Gaussian noise and covariates: $\epsilon_i \sim \N(0, \sigma^2)$ and $\x_i \sim \N(0, r^2 \mm{I}_\nfeatures)$.
\end{itemize}

\subsection{Comments about the implementation}
\label{implementation}

The current implementation has some \emph{limitations}: currently, it works only with datasets that fit into memory, though this constraint could be addressed in future updates. Additionally, the implementation does not utilize GPU acceleration, which could potentially enhance performance. While our current implementation is based on NumPy, exploring implementations using deep learning libraries could offer further advantages. Moreover, our implementation implements the gradient computation efficiently (using Cython). But the projection in a less efficient way (using pure Python). Here GD and AGD compute only one projection per iteration, while SGD and SAGA compute one projection per sample. We leave implementation improvements for future work. Partially due to this stochastic algorithms (SGD and SAGA) have a time per iteration that increases quite fast (see \Cref{fig:exectime_classification})

\subsection{Convergence}
\label{convergence}

\begin{figure}[H]
    \vspace{-10pt}
    \centering
    \subfloat[step size]{\includegraphics[width=0.5\textwidth]{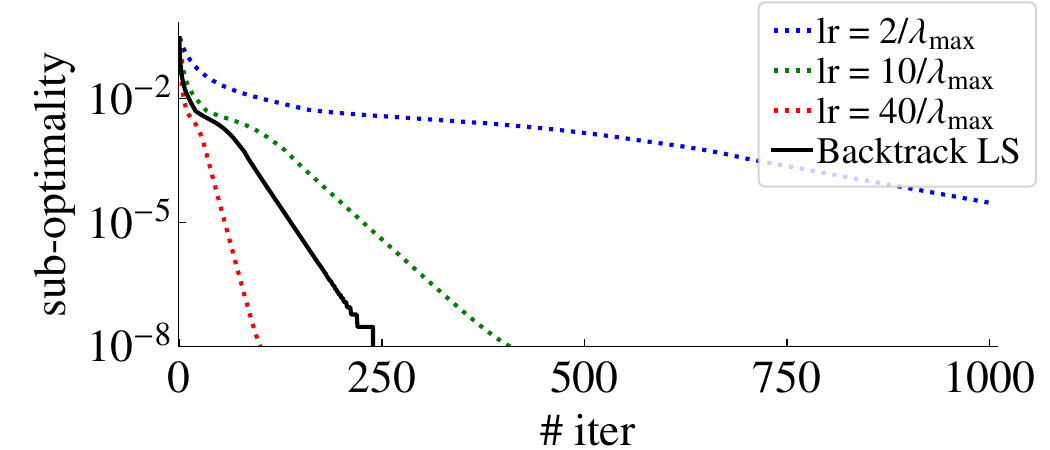}}
    \subfloat[acceleration]{\includegraphics[width=0.5\textwidth]{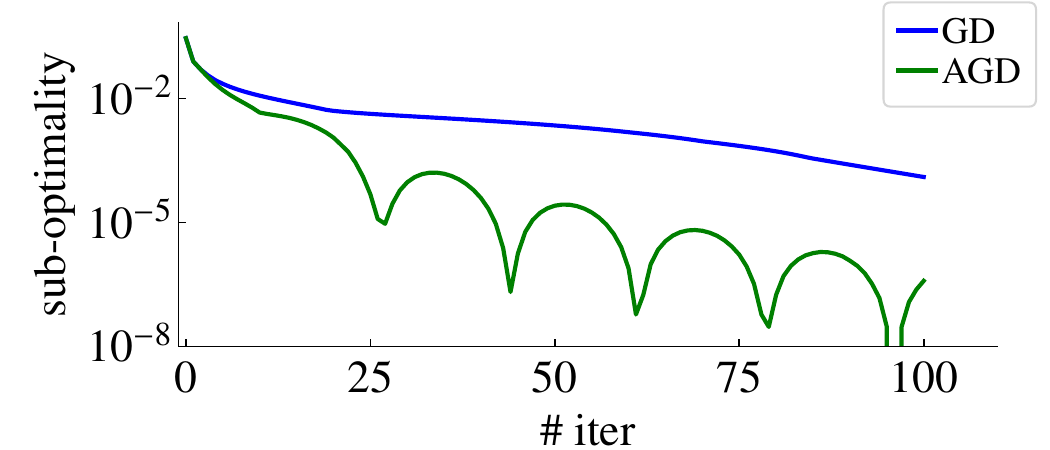}}\\
    \subfloat[stochastic]{\includegraphics[width=0.5\textwidth]{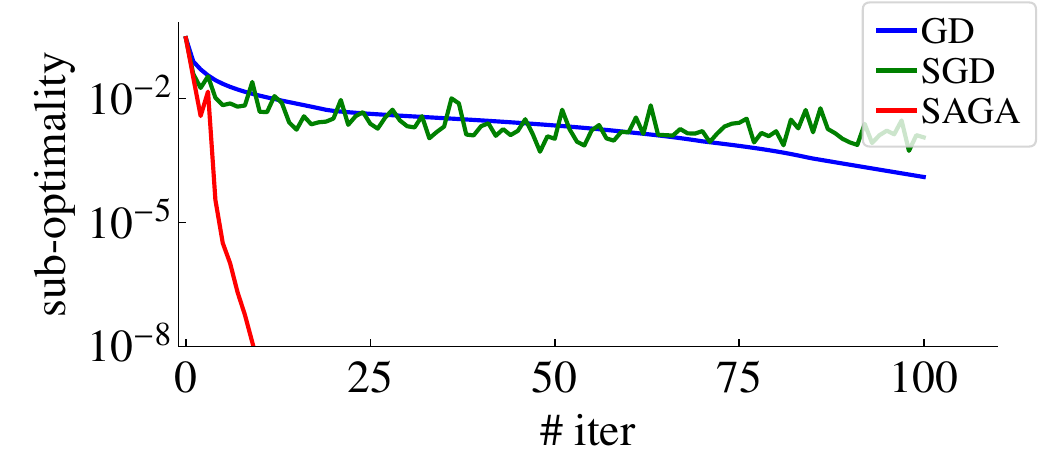}}

    \caption{\emph{Convergence classification.} Sub-optimality \emph{vs} the number of iterations in the breast cancer dataset. In (a), we show the results for different step sizes. In (b) we show the effect of momentum (with line search implemented for both methods). In (c) we show the effect of using a stochastic algorithm. Results for $\ell_\infty$-adv. training with $\delta=0.01$.}
    \label{fig:convergence_classif}\vspace{-10pt}
\end{figure}

\begin{figure}[H]
    \vspace{-10pt}
    \centering
    \subfloat[diabetes]{\includegraphics[width=0.5\textwidth]{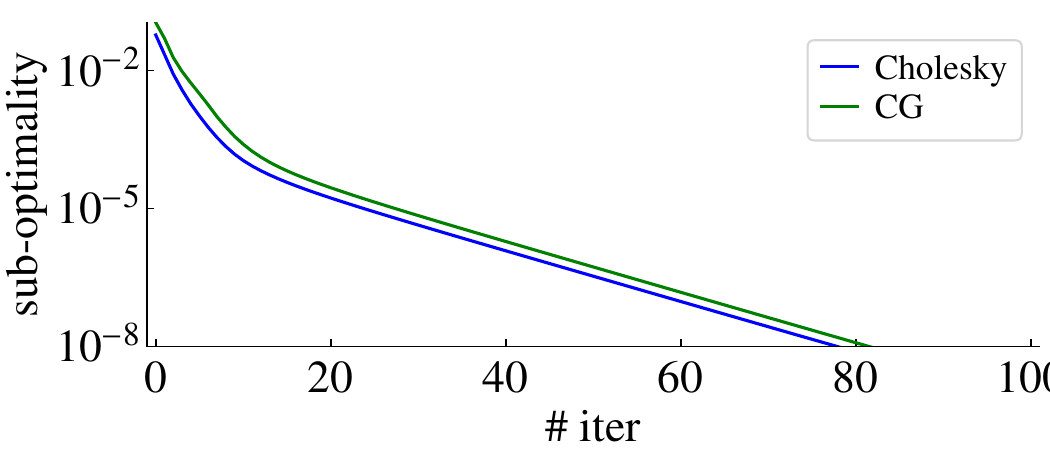}}
    \subfloat[wine]{\includegraphics[width=0.5\textwidth]{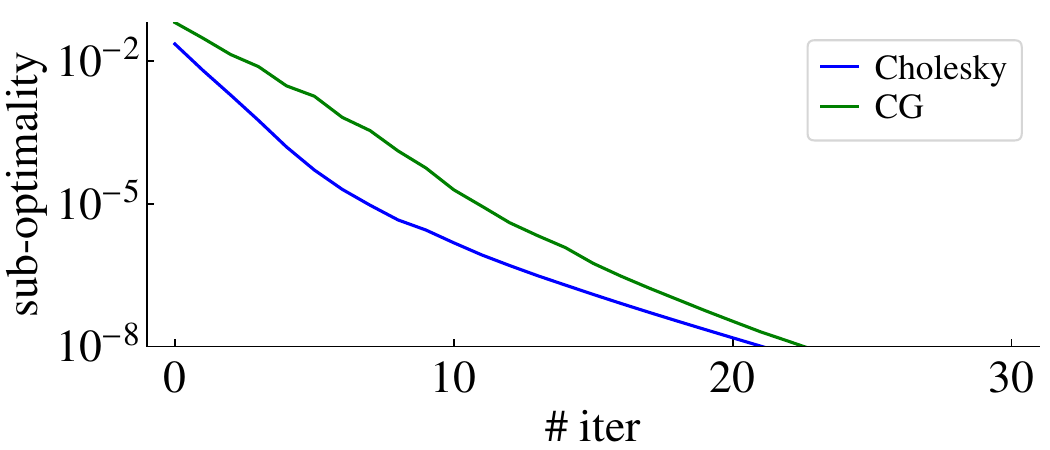}}
    \caption{\emph{Convergence regression.} Sub-optimality \emph{vs} the number of iterations in two datasets: (a) Diabetes; and (b) Wine. We use default $\delta$ described in \Cref{sec:default-value}}
    \label{fig:convergence_regression}\vspace{-10pt}
\end{figure}

\begin{figure}[H]
    \centering
    \includegraphics[width=0.47\linewidth]{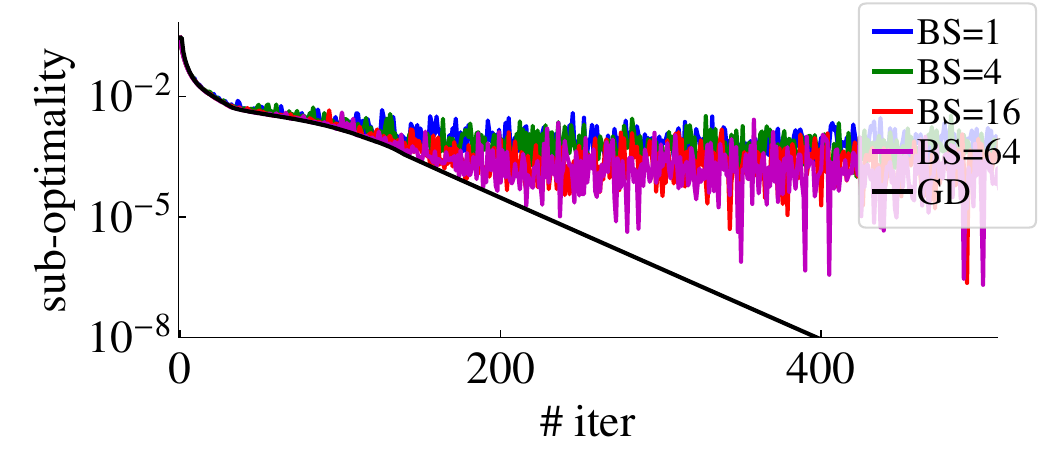}
    \caption{\emph{Varying batch size.} Sub-optimality vs number of iterations for SGD with increasing batch size. The learning rate is constant for all examples. Results for adversarial training with $\delta = 0.01$. Here one iteration is a full pass through the dataset.}
    \label{fig:varying-batch-size}
\end{figure}

\begin{figure}[H]
    \centering
    \includegraphics[width=0.6\linewidth]{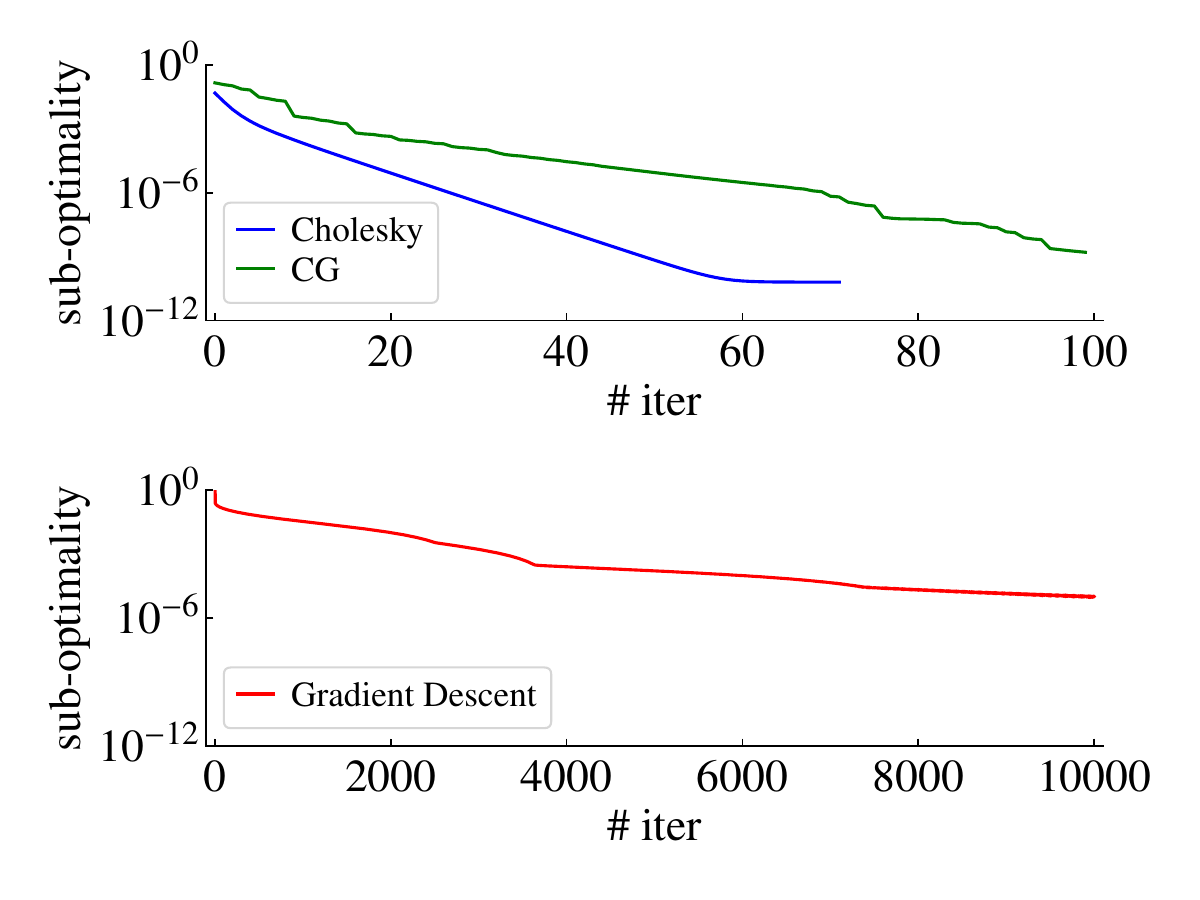}
    \caption{\emph{Gradient descent baseline for regression.} Comparison of our regression algorithm with the gradient descent applied to Equation~\eqref{eq:advtraining-closeform-linf}. Overall, gradient descent in the closed formula solution is not very efficient. In the example above, in 10000 iterations it achieves a suboptimality of 1e-6, while, the proposed conjugate gradient implementation (which has a similar number of gradient evaluations per iteration) achieves a suboptimality of 1e-10 in ~100 iterations. We highlight that here the function is not smooth and the transformation that we used for classification to make the cost function smooth does not directly translate to regression. Hence we cannot obtain the same improved rates that were obtained there.}
    \label{fig:baseline_gd}
\end{figure}

\subsection{Execution time}
\label{execution_time}

\begin{figure}[H]
    \vspace{-10pt}
    \centering
    \includegraphics[width=0.5\textwidth]{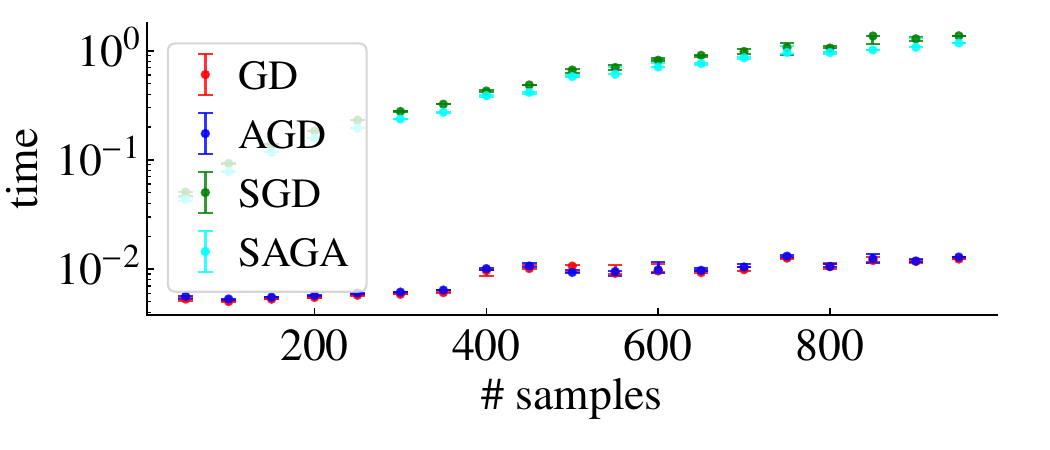}
    \caption{\emph{Execution time classification.} Execution time of 100 epochs using the different proposed algorithms. Here one iteration of the SGD or SAGA is equivalent to a full pass through the data. We use an isotropic synthetic dataset $y_i = \text{sign}(\x_i^{\top} \param + \epsilon_i).$ with a fixed ratio between the number of training samples and the number of features $\frac{n}{p} = 10$.}
    \label{fig:exectime_classification}\vspace{-10pt}
\end{figure}

\subsection{Test performance}
\label{test_performance}

\begin{table}[H]
 \footnotesize
 \caption{\emph{Performance of in \textbf{regression} tasks.}  We compare $\ell_\infty$ and $\ell_2$-adversarial training with default lasso and ridge regression using scikit-learn values and with the regularization parameter set with cross-validations. We also consider, BIC and AIC criteria for lasso. As well as the nonlinear methods: Multilayer perception with one layer and Gradient Boost.}
 \subfloat[Coefficient of determination $R^2$ (higher is better).]{
 \resizebox{\columnwidth}{!}{
\begin{tabular}{lrrrrrrrr}
\toprule
method &  advtrain\_linf &  lasso\_cv &  lasso &  advtrain\_l2 &  ridge &  ridgecv &  gboost &  mlp \\
\midrule
Abalone     &           0.49 &      0.52 &  -0.00 &         0.50 &   0.50 &     0.52 &    0.53 & 0.58 \\
Diabetes    &           0.34 &      0.38 &  -0.04 &         0.37 &   0.37 &     0.38 &    0.30 & 0.20 \\
Diamonds    &           0.89 &      0.89 &  -0.00 &         0.89 &   0.89 &     0.89 &    0.97 & 0.97 \\
House sales &           0.68 &      0.68 &  -0.00 &         0.68 &   0.68 &     0.68 &    0.87 & 0.87 \\
Polution    &           0.73 &      0.78 &  -0.05 &         0.71 &   0.61 &     0.73 &    0.63 & 0.55 \\
US crime    &           0.64 &      0.64 &  -0.00 &         0.64 &   0.64 &     0.64 &    0.66 & 0.44 \\
Wine        &           0.28 &      0.29 &  -0.00 &         0.28 &   0.29 &     0.29 &    0.38 & 0.35 \\
\bottomrule
\end{tabular}
}
}
 \label{tab:r2_all}
 \footnotesize
  \subfloat[Root mean squared error (lower is better).]{
\footnotesize
\resizebox{\columnwidth}{!}{
\begin{tabular}{lrrrrrrrr}
\toprule
method &  advtrain\_linf &  lasso\_cv &  lasso &  advtrain\_l2 &  ridge &  ridgecv &  gboost &  mlp \\
\midrule
Abalone     &           0.72 &      0.70 &   1.01 &         0.71 &   0.71 &     0.70 &    0.69 & 0.65 \\
Diabetes    &           0.73 &      0.71 &   0.92 &         0.71 &   0.71 &     0.71 &    0.75 & 0.80 \\
Diamonds    &           0.34 &      0.34 &   1.00 &         0.34 &   0.34 &     0.34 &    0.17 & 0.16 \\
House Sales &           0.57 &      0.57 &   1.01 &         0.57 &   0.57 &     0.57 &    0.36 & 0.36 \\
Polution    &           0.49 &      0.44 &   0.97 &         0.51 &   0.59 &     0.49 &    0.57 & 0.63 \\
US crime    &           0.60 &      0.59 &   0.99 &         0.60 &   0.59 &     0.59 &    0.58 & 0.74 \\
Wine        &           0.86 &      0.85 &   1.01 &         0.86 &   0.85 &     0.85 &    0.79 & 0.82 \\
\bottomrule
\end{tabular}
}}
 \label{tab:performance_regression_all}
\end{table}

\begin{figure}[H]
    \vspace{-10pt}
    \centering
    \subfloat[Isotropic]{\includegraphics[width=0.5\textwidth]{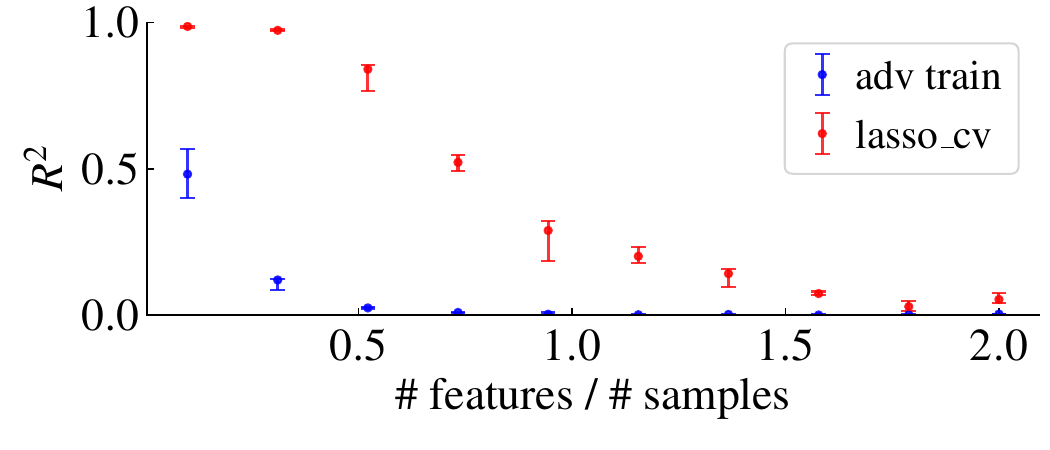}}
    \subfloat[Spiked eigenvalues]{\includegraphics[width=0.5\textwidth]{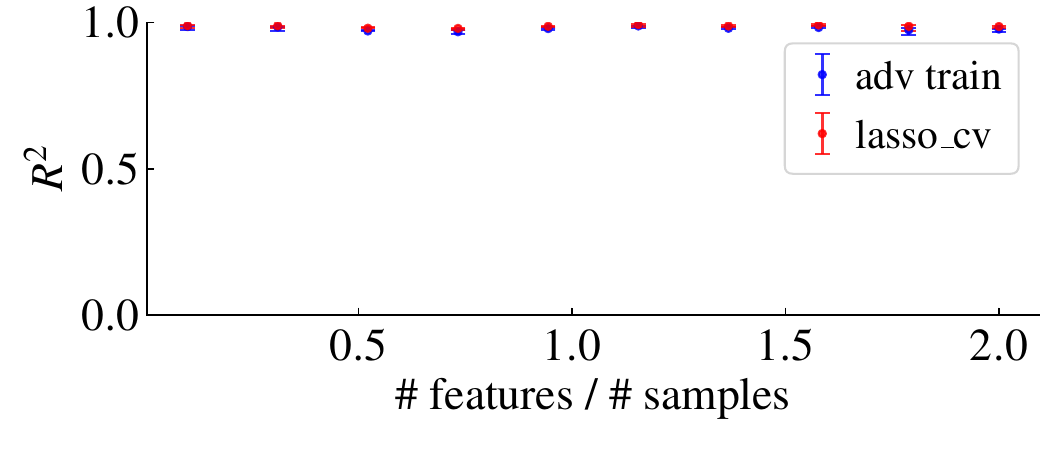}}\\
    \subfloat[Sparse parameter vector]{\includegraphics[width=0.5\textwidth]{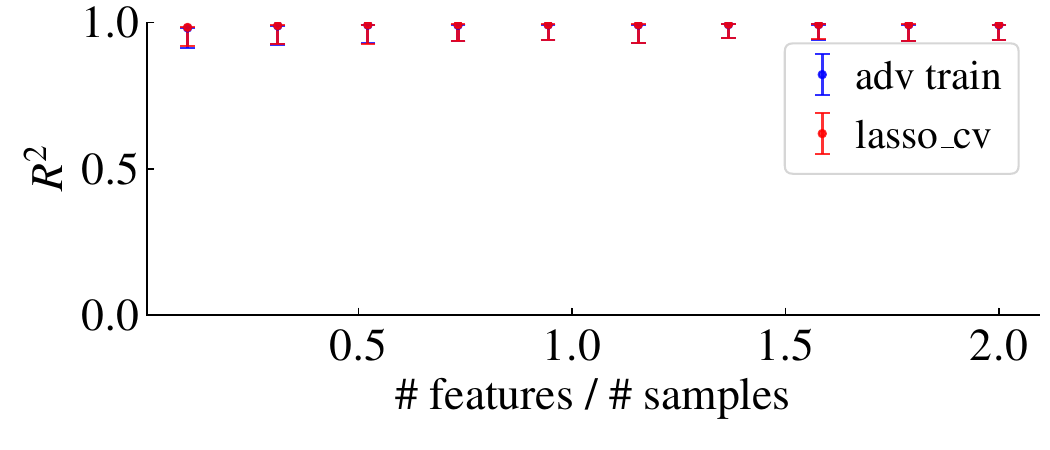}}
    \caption{\emph{Adversarial \textbf{regression} in linear regression.} Coefficient of determination $R^2$ vs the ratio between number of features $p$ and number of training samples $n$.}
    \label{fig:varying_n_vs_p}\vspace{-10pt}
\end{figure}

\begin{figure}[H]
    \vspace{-10pt}
    \centering
\includegraphics[width=0.5\textwidth]{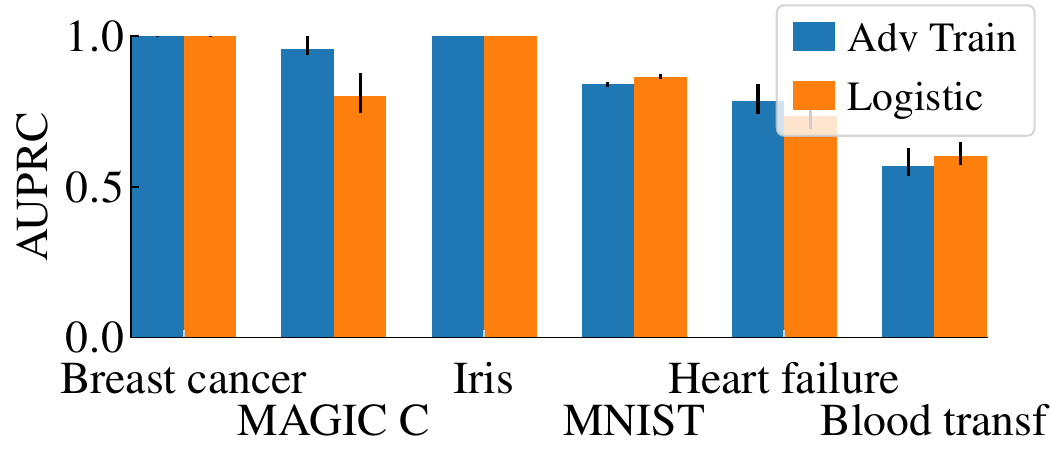}
    \caption{\emph{Adversarial \textbf{classification} in linear regression.} We give the Area Under the Precision-Recall Curve (AUPRC). The default value $\delta$ we described in~\Cref{sec:default-value} here is use for classification datasets, most of the reasoning does not necessarily apply here, but we do it for comparison. We compare with Logistic regression with default $\ell_2$ regularization also with default value (and not with Cross-Validation). The performance is mostly comparable. Being significantly better only for the MAGIC-C dataset. }
    \label{fig:performance_classification}\vspace{-10pt}
\end{figure}

\end{document}